\newtheorem{myDef}{Definition} 
\newtheorem{theorem}{Theorem}
\newtheorem{proof}{Proof}[section]
\begin{document}

%%%%%%%%% TITLE
\title{ArtFlow: Unbiased Image Style Transfer via Reversible Neural Flows}

\author{Jie An$^{1}$\thanks{J. An and S. Huang contribute equally. This work is done when J. An is an intern in Tencent AI Lab. The code is available at \url{https://github.com/pkuanjie/ArtFlow}.}\quad
Siyu Huang$^{2}$\footnotemark[1] \quad
Yibing Song$^3$ \quad
Dejing Dou$^2$ \quad
Wei Liu$^4$ \quad
Jiebo Luo$^1$ \\
$^1$University of Rochester \quad
$^2$Baidu Research \quad 
$^3$Tencent AI Lab \quad
$^4$Tencent Data Platform \\
{\tt\small \{jan6,jluo\}@cs.rochester.edu} \quad 
{\tt\small \{huangsiyu,doudejing\}@baidu.com}\\
{\tt\small yibingsong.cv@gmail.com} \quad
{\tt\small wl2223@columbia.edu}\\
}
\maketitle
\pagestyle{empty}  % no page number for the second and the later pages
\thispagestyle{empty} % no page number for the first page

%%%%%%%%% ABSTRACT
% \begin{abstract}
%     Neural style transfer is an image editing task aiming at transferring the artistic style from a reference image to the input content image. Starting from Gatys~\etal, significant advances have been made in style transfer. Recently proposed universal style transfer algorithms achieve both good stylization effects and great generalization ability. However, those algorithms are all biased towards preserving more style information. Consequently, the content information are leaked in style transfer. To address this issue, we propose a new style transfer algorithm named ArtFlow. The proposed ArtFlow is constructed based on neural flows in conjunction with AdaIN, and hereby supports both forward and reverse inferences. In style transfer, the forward inference is to project images into deep features and the reverse inference recovers features back to images in an unbiased way. Extensive experiments demonstrate that ArtFlow can achieve comparable style transfer performance against state-of-the-art methods while avoiding the content leak phenomenon. 
% \end{abstract}

% written by SIYU
\begin{abstract}
Universal style transfer retains styles from reference images in content images. While existing methods have achieved state-of-the-art style transfer performance, they are not aware of the content leak phenomenon that the image content may corrupt after several rounds of stylization process. In this paper, we propose ArtFlow to prevent content leak during universal style transfer. ArtFlow consists of reversible neural flows and an unbiased feature transfer module. It supports both forward and backward inferences and operates in a projection-transfer-reversion scheme. The forward inference projects input images into deep features, while the backward inference remaps deep features back to input images in a lossless and unbiased way. Extensive experiments demonstrate that ArtFlow achieves comparable performance to  state-of-the-art style transfer methods while avoiding content leak.
%Existing universal style transfer methods have achieved state-of-the-art stylization effects and generalization abilities. However, empirical evidence reveals that the content information can be severely leaked during the style transfer process by those algorithms. To address this issue, this paper presents a novel universal style transfer framework named ArtFlow. ArtFlow consists of reversible neural flows in conjunction with an unbiased feature transfer module. ArtFlow supports both forward and reverse inferences and works in a projection-transfer-reversion scheme. The forward inference projects images into deep features, and the reverse inference recovers features back to images in a lossless and unbiased way. Extensive experiments demonstrate that the proposed ArtFlow achieves comparable style transfer performance to the state-of-the-art methods while avoiding the content leak issue.
\end{abstract}

%%%%%%%%% BODY

\section{Introduction}
Neural style transfer aims at transferring the artistic style from a reference image to a content image. 
Starting from \cite{gatys2015texture,gatys2015neural}, numerous works based on iterative optimization~\cite{gatys2016preserving,risser2017stable,li2017laplacian,li2017demystifying} and feed-forward networks~\cite{johnson2016perceptual,ulyanov2016texture,chen2017stylebank,zhang2017multi} improve style transfer from either visual quality or computational efficiency.
Despite tremendous efforts, these methods do not generalize well for multiple types of style transfer. Universal style transfer (UST) is proposed to improve this generalization ability. 
%
%all these methods can only handle limited styles per training/optimization. 
%
%To improve the generalization ability of style transfer, universal style transfer (UST) methods~\cite{chen2016fast,li2017universal,huang2017arbitrary,sheng2018avatar,li2018learning,lu2019optimal} are proposed. 
%
The representative UST methods include AdaIN~\cite{huang2017arbitrary}, WCT~\cite{li2017universal}, and Avatar-Net~\cite{sheng2018avatar}. 
These methods are continuously extended by~\cite{gu2018arbitrary,jing2020dynamic,wang2020diversified,an2020real,sheng2018avatar,li2018closed,lu2019optimal,li2018learning,an2019ultrafast,wang2020collaborative}.
While achieving favorable results as well as generalizations, these methods are limited to disentangling and reconstructing image content during the stylization process.
%
% QUESTION: how to express the importance of the content leak?
%These benchmark methods achieve good style transfer results while having improved generalization abilities. However, in terms of the theory and mechanism behind the algorithms,  they all have an implicit but significant theoretical flaw -- The content information is not faithfully disentangled, reconstructed or preserved during style transfer. 
Fig.~\ref{fig:content_leak} shows some examples. 
Existing methods~\cite{li2017universal,huang2017arbitrary,sheng2018avatar} effectively stylize content images in (c).
However, image contents are corrupted after several rounds of stylization process where we send the reference image and the output result into these methods.
We define this phenomenon as content leak and provide an analysis in the following:
%In normal style transfer results (c) by those algorithms, the content objects, \eg, the bus, building, and street, are recognizable. However, when we perform style transfer on an image repeatedly for several rounds, as shown in (d), most of the content information disintegrates. This phenomenon indicates that a part of content information is lost during the style transfer process of those algorithms. In this paper, we call this phenomenon Content Leak.
%Regarding the style transfer performance, 
\begin{figure}[t]
\centering
\includegraphics[width=\linewidth]{./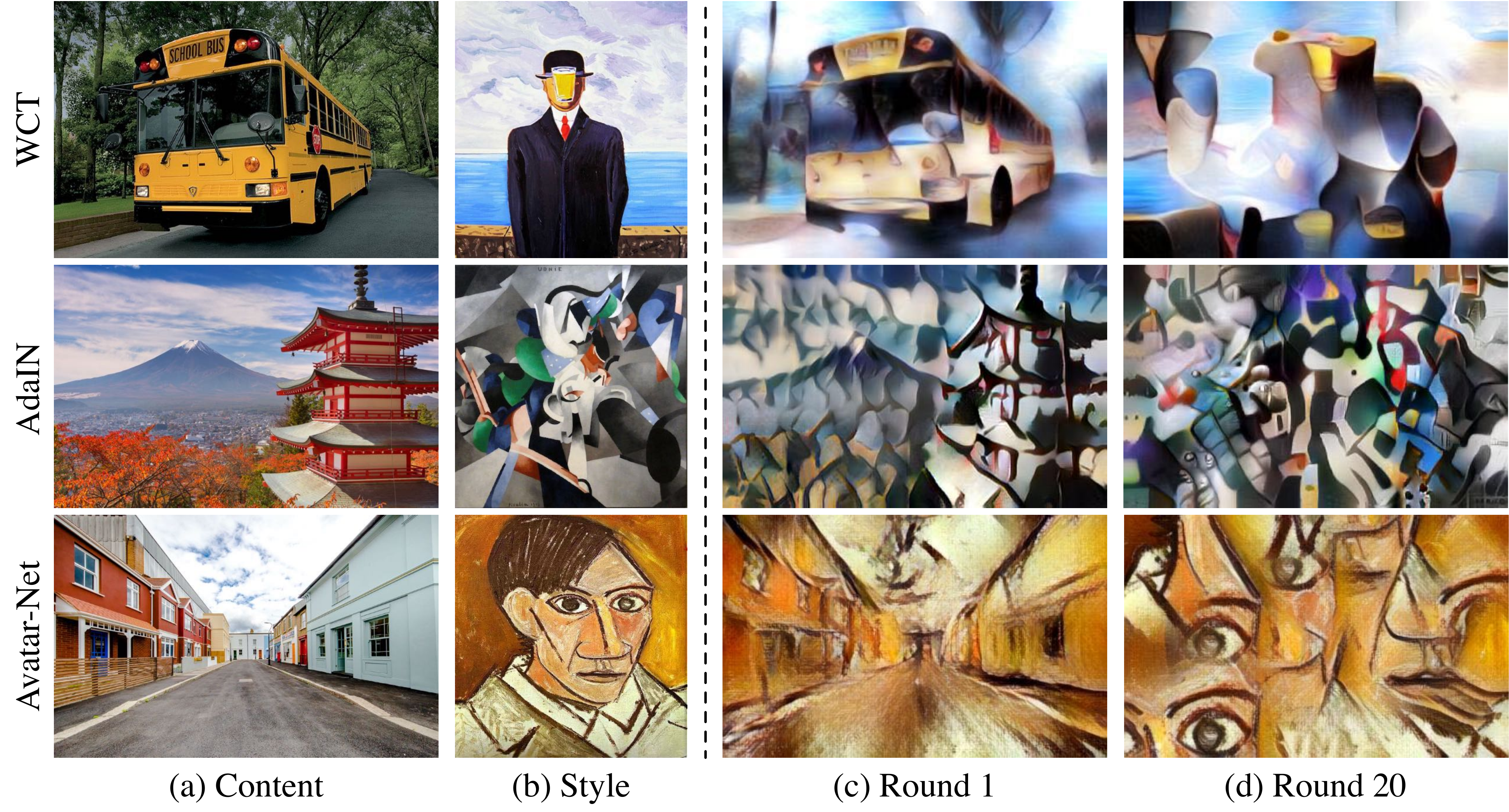}
\caption{Content leak visualization. Existing style transfer methods are not effective to preserve image content after several rounds of stylization process as shown in (d), although their performance is state-of-the-art in the first round as shown in (c).}
%Visualization of the Content Leak phenomenon. For each method, we first perform style transfer with the input content-style pair. Then we use the stylized image as the new content and perform style transfer for 20 rounds. We show  stylization results in the first (c) round and the 20-th round (d). with content leak.}
\label{fig:content_leak}
\end{figure}

Content leak appears due to the design of UST methods that usually consist of three parts: the first part is a fixed encoder for image embedding, the second part is a learnable decoder to remap deep features back to images, and the third part is a style transfer module based on deep features.
%
%Why does Content Leak happen? To uncover the causes of the Content Leak phenomenon, we first summarize the mechanism of the state-of-the-art UST methods. Existing UST methods usually consist of three parts: 1) a fixed \emph{encoder} to embed images into deep feature space, which is usually a VGG19~\cite{simonyan2014very} or GoogLeNet~\cite{szegedy2015going,an2020real} pre-trained on the ImageNet dataset~\cite{deng2009imagenet}, 2) a learnable \emph{decoder} to invert deep features back to the image space, and 3) a \emph{transfer module} to perform style transfer upon deep features. For example, the encoder and decoder structures of AdaIN, WCT, and Avatar-Net are very similar, but their transfer modules are distinct from each other.
%
We observe that the first part is fixed. The appearance of content leak indicates 
the accumulated image reconstruction errors brought by the decoder, or the biased training process of either the decoder or the style transfer module.
%
%Since the encoder of UST algorithms is fixed, if the decoder and the transfer module do not bring accumulated error or biases towards style, Content Leak would not happen. 
%
%Therefore, the Content Leak phenomenon is caused by the image reconstruction error of the decoder, biased decoder training, or biased style transfer modules. 
%
Specifically, the content leaks of WCT~\cite{li2017universal} and its variants~\cite{li2018learning,lu2019optimal,wang2020collaborative} is mainly caused by the image reconstruction error of the decoder. The content leak of AdaIN series~\cite{huang2017arbitrary,jing2020dynamic,wang2020diversified} and Avatar-Net~\cite{sheng2018avatar} are additionally caused by the biased decoder training and a biased style transfer module, respectively. Sec.~\ref{sec:analysis} shows more analyses.

In this work, we propose an unbiased style transfer framework called ArtFlow to 
robustify exisiting UST methods upon overcoming content leak.
%address the Content Leak issue of AdaIN, WCT, Avatar-Net and their variances.
Different from the prevalent encoder-transfer-decoder structure, ArtFlow introduces both forward and backward inferences to formulate a projection-transfer-reversion pipeline.
%
%Different from existing approaches adopting the \emph{encoder-transfer-decoder} scheme to perform style transfer, ArtFlow supports both forward and reverse inferences, and therefore forms a new style transfer scheme, \ie, \emph{projection-transfer-reversion}. 
This pipeline is based on neural flows~\cite{dinh2014nice} and only contains a Projection Flow Network (PFN) in conjunction with an unbiased feature transfer module. 
The neural flow refers to a number of deep generative models \cite{dinh2014nice,ho2019flow++} which estimate density through a series of reversible transformations. 
Our PFN follows the neural flow model GLOW \cite{kingma2018glow} which consists of a chain of revertible operators including activation normalization layers, invertible $1\times 1$ convolutions, and affine coupling layers \cite{dinh2016realnvp}. 
% Because of the reversibility of these flow operators, PFN supports both forward and reverse inferences, and therefore allows a new style transfer scheme, \ie, \emph{forward-transfer-reverse}, which is completely different from existing UST methods based on auto-encoders. 
Fig.~\ref{fig:intro_framework} shows the structure of ArtFlow. It first projects both the content and style images into latent representations via forward inference.
Then, it makes unbiased style transfer upon deep features and reconstructs the stylized images via reversed feature inference. 

\begin{figure}[t]
	\centering
	\includegraphics[width=\linewidth]{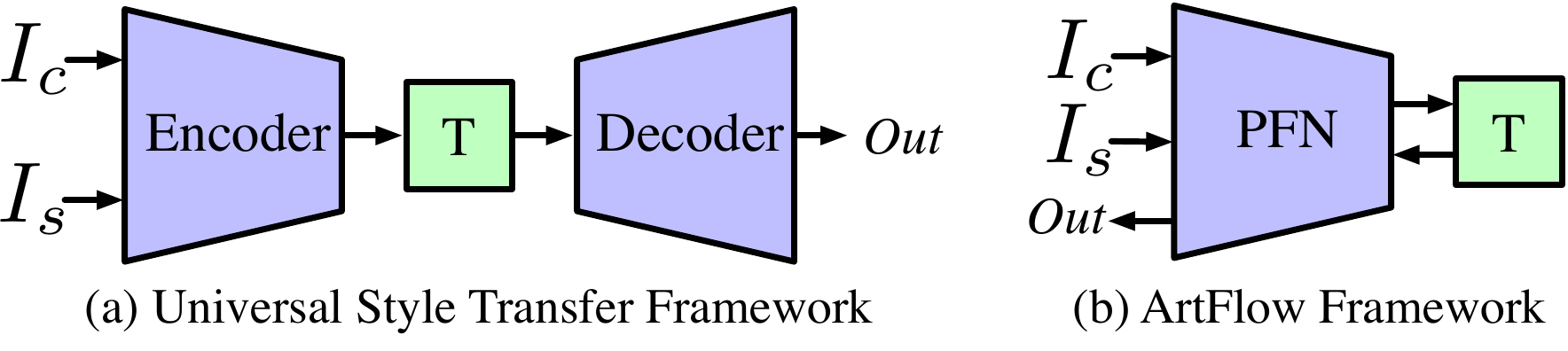}
	\caption{A comparison between the auto-encoder based style transfer framework and the proposed ArtFlow framework.}
	\label{fig:intro_framework}
\end{figure}

The proposed PFN avoids the image reconstruction error and image recovery bias which usually appear in the encoder-decoder framework.
PFN allows unbiased and lossless feature extraction and image recovery. 
To this end, PFN facilitates the comparison of style transfer modules in a fair manner. Based on PFN, we perform theoretical and empirical analyses of the inherent biases of style transfer modules adopted by WCT, AdaIN, and Avatar-Net. We show that the transfer modules of AdaIN and WCT are unbiased, while the transfer module of Avatar-Net is biased towards style. Consequently, we adopt the transfer modules of AdaIN and WCT as the transfer modules for ArtFlow to achieve an unbiased style transfer.

The contributions of this work are three-fold:
\begin{itemize}
    \item We reveal the Content Leak issue of the state-of-the-art style transfer algorithms and identify the three main causes of the Content Leak in AdaIN~\cite{huang2017arbitrary}, WCT~\cite{li2017universal}, and Avatar-Net~\cite{sheng2018avatar}.
    \item We propose an unbiased, lossless, and reversible network named PFN based on neural flows, which allows both theoretical and empirical analyses of the inherent biases of the popular style transfer modules.
    \item Based on PFN in conjunction with an unbiased style transfer module, we propose a novel style transfer framework, \ie, ArtFlow, which achieves comparable style transfer results to state-of-the-art methods while avoiding the Content Leak issue.
\end{itemize}

\section{Related Work}

{\flushleft \bf Image Style Transfer}.
Image style transfer is a long-standing research topic. Before deep neural networks~\cite{song2017crest,liu2020rethinking,wang2020rethinking,wang2021unsupervised} are applied to the style transfer, several algorithms based on stroke rendering~\cite{hertzmann1998painterly}, image analogy~\cite{hertzmann2001image,shih2014style,frigo2016split,song2017stylizing,liao2017visual,song2017learning,song2019joint}, and image filtering~\cite{winnemoller2006real} are proposed to make artistic style transfer. These methods usually have to trade-off between style transfer quality, generalization, and efficiency. Gatys~\etal \cite{gatys2015texture,gatys2015neural} introduce a Gram loss upon deep features to represent image styles, which opens up the neural style transfer era. Inspired by Gatys~\etal, numerous neural style transfer methods have been proposed. We categorize these methods into one style per model~\cite{li2016combining, ulyanov1607instance, johnson2016perceptual, ulyanov2016texture, ulyanov2017improved, wang2017multimodal, risser2017stable, li2017laplacian}, multi-style per model~\cite{dumoulin2017learned, chen2017stylebank, huang2017real, gong2018neural}, and universal style transfer methods~\cite{chen2016fast,li2017universal,huang2017arbitrary,sheng2018avatar,gu2018arbitrary,an2019ultrafast,wang2020collaborative,liu2020geometric,wang2020diversified,an2020real} with respect to their generalization abilities. 
In this paper, our ArtFlow belongs to universal style transfer and it consists of reversible neural flows. The forward and backward inferences are utilized for lossless and unbiased image recovery.

%The method proposed in this paper belongs to the universal style transfer category. Most universal style transfer methods have a similar framework, which adopts an encoder, a decoder, and a transfer module. The main difference between the proposed ArtFlow and other universal style transfer methods is that we build the backbone network based on the reversible neural flows. Instead of training a separate decoder, ArtFlow uses the forward and backward inference of the flow-based backbone to replace the encoder and the decoder, respectively. Since the neural flows are capable of a lossless and unbiased image recovery, ArtFlow achieves an unbiased style transfer in conjunction with using AdaIN/WCT as the feature transfer module.

{\flushleft \bf Neural flows}.
Neural flows refer to a subclass of deep generative models, which learns the exact likelihood of high dimensional observations (\eg, natural images, texts, and audios) through a chain of reversible transformations. As a pioneering work of neural flows, NICE \cite{dinh2014nice} is proposed to transform low dimensional densities to high dimensional observations with a stack of affine coupling layers. Following NICE, a series of neural flows, including RealNVP \cite{dinh2016realnvp}, GLOW \cite{kingma2018glow}, and Flow++ \cite{ho2019flow++}, are proposed to improve NICE with more powerful and flexible reversible transformations. The recently proposed neural flows \cite{kingma2018glow,ho2019flow++,ma2019macow} are capable of synthesizing high-resolution natural/face images, realistic speech data \cite{prenger2019waveglow,kim2020glowtts}, and performing make-up transfer~\cite{chen2019beautyglow}. In this work, the proposed ArtFlow consists of a reversible network PFN and an unbiased feature transfer module. The content leak can be addressed via lossless forward and backward inferences and unbiased feature transfer. In comparison, BeautyGlow~\cite{chen2019beautyglow} shares the similar spirits but is not applicable for unbiased style transfer.

\begin{figure}
	\centering
	\includegraphics[width=\linewidth]{./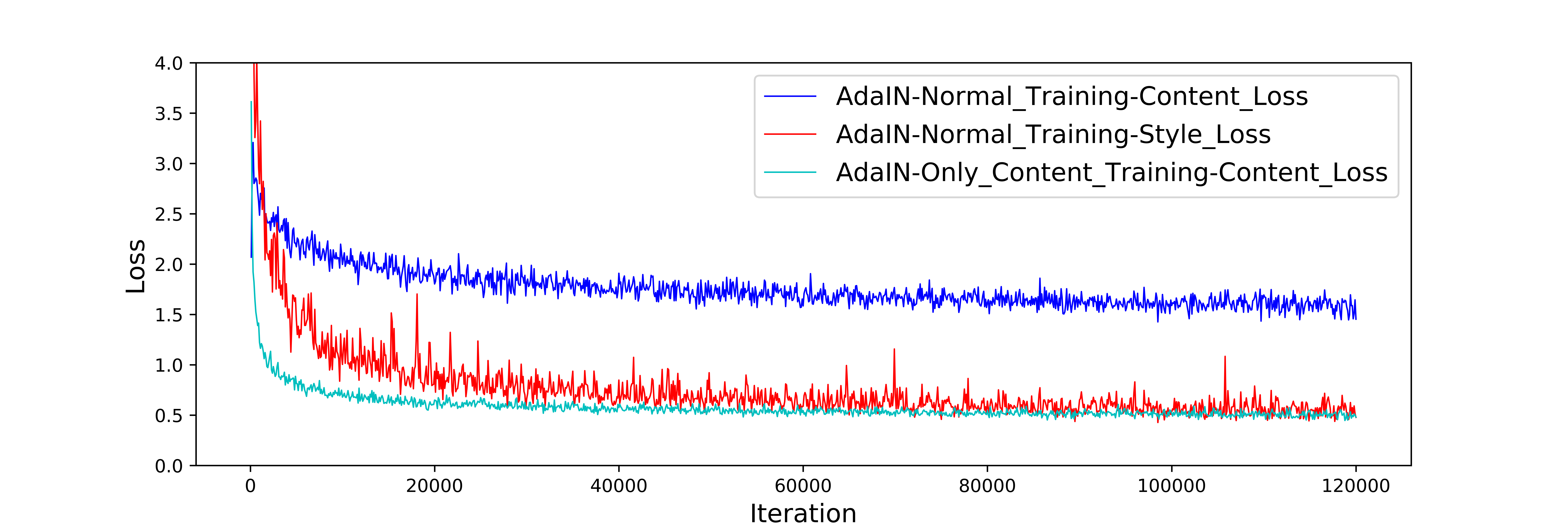}
	\caption{Loss curves of AdaIN \cite{huang2017arbitrary} training: Using both content and style losses \vs only using the content loss.}
	\label{fig:adain_loss}
\end{figure}

\section{Pre-analysis}
\label{sec:analysis}
Before introducing the proposed ArtFlow, we first make a pre-analysis to uncover the Content Leak phenomenon of the state-of-the-art style transfer algorithms and analyze the causes of Content Leak. We make the aforementioned pre-analysis by answering two questions: What Content Leak is and why Content Leak happens.

\subsection{What is Content Leak?}
For a style transfer algorithm, Content Leak occurs because the stylization results lose some content information. Although the existing state-of-the-art style transfer algorithms, 
\eg, AdaIN~\cite{huang2017arbitrary}, WCT~\cite{li2017universal}, and Avatar-Net~\cite{sheng2018avatar}, can produce good style transfer results, they still suffer from the Content Leak issue. Since it is hard to directly extract the content information from the stylized image and compare it with the input content image, we adopt an alternative way to show empirical evidence of the Content Leak phenomenon. More specifically, we first perform the style transfer with an input content-style pair based on a style transfer algorithm. We then take the stylized image as the new content and repeatedly perform the style transfer process for 20 times. Fig.~\ref{fig:content_leak} shows the results of our experiments for AdaIN (row 1), WCT (row 2), and Avatar-Net (row 3). 
% For each algorithm, we conduct two separate experiments, \ie, a single style transfer and a multi style transfer, to avoid the potential misleading caused by specific style images and improve the robustness of our empirical studies. In first experiment (row 1, 3), we use a fixed style image as shown in Fig.~\ref{fig:content_leak} (a). Conversely, in the other experiment  (row 2, 4), we use a sequence of style images and change the style image per round. 
According to Fig.~\ref{fig:content_leak}, when we perform style transfer for 20 rounds, we can hardly recognize any detail of the content image. 
Such an empirical evidence 
% holds for AdaIN and WCT in both single style and multi style settings, which 
indicates that the Content Leak phenomenon occurs in all AdaIN, WCT, and Avatar-Net. In the following, we discuss the causes of the Content Leak, which imply that the Content Leak issue also exists in other state-of-the-art style transfer algorithms.
%Regarding the analysis to the Content Leak issue of other state-of-the-art style transfer algorithms, we leave them for simplicity. Instead, 

\begin{figure}
    \centering
    \includegraphics[width=\linewidth]{./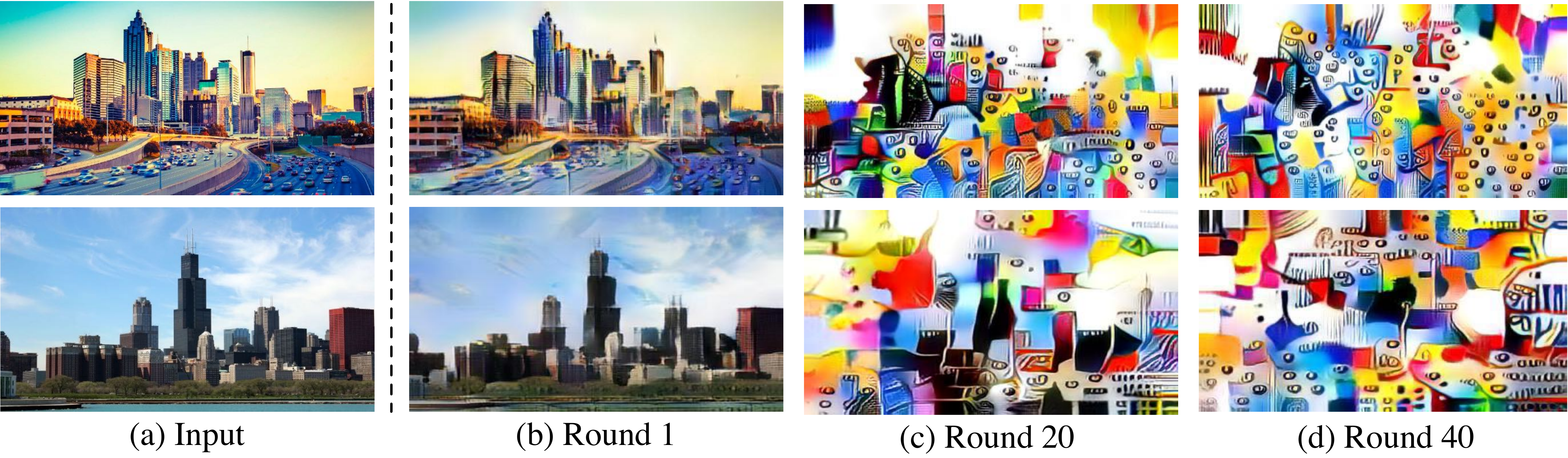}
    \caption{Multiple rounds of image encoding and decoding using the auto-encoder of AdaIN~\cite{huang2017arbitrary}. }
    \label{fig:content_reconstruction}
    %In the first round, we feed an image into the auto-encoder. Then we repeatedly feed the outputs into the auto-encoder for 40 times.
\end{figure}

\begin{figure*}[t]
\centering
\includegraphics[width=\linewidth]{./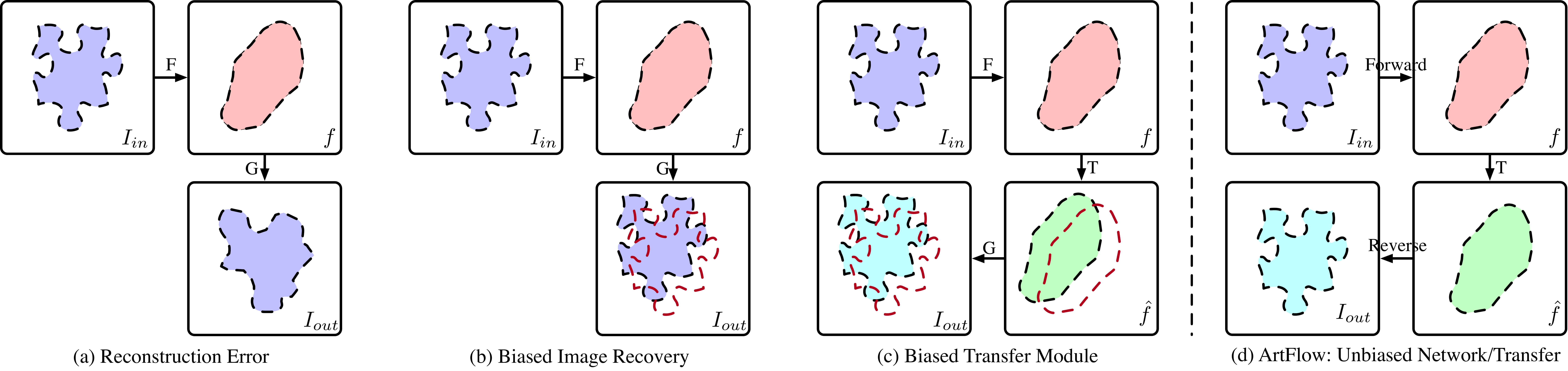}
\caption{Causes of the Content Leak phenomenon. (a) Reconstruction error, \ie, the content of output image is disturbed. (b) Biased image recovery, \ie, the output image shifts to a biased style via the decoder. (c) Biased style transfer module, \ie, the stylized feature shifts to a biased style via feature stylization. The red dash lines in (b) and (c) denote unbiased positions of the manifolds. (d) The proposed ArtFlow scheme, where both the network and the transfer module are not biased, while the backbone network does not introduce any reconstruction error. Notations -- $F$ and $G$: encoder and decoder used by existing style transfer algorithms, respectively. $T$: style transfer module, \eg, AdaIN or WCT. $I_{in}$ and $I_{out}$: input and output images. $f$ and $\hat{f}$: vanilla and stylized deep features of $I_{in}$. }
\label{fig:bias_vs_unbias}
\end{figure*}

\subsection{Why Does Content Leak Happen?}
\label{causes}
Taking AdaIN~\cite{gu2018arbitrary}, WCT~\cite{li2017universal}, and Avatar-Net~\cite{sheng2018avatar} as three representatives of style transfer algorithms, we study the causes of the Content Leak phenomenon.

%\vspace{-1mm}
\noindent\textbf{Reconstruction error.} A straightforward explanation to Content Leak is that the decoder of existing style transfer algorithms cannot achieve lossless image reconstruction of the input content image. For example, all AdaIN, WCT, and Avatar-Net adopt VGG19~\cite{simonyan2014very} as the encoder and train a structurally symmetrical decoder to invert the features of VGG19 back to the image space. Although an image reconstruction loss~\cite{li2017universal} or a content loss~\cite{huang2017arbitrary} is used to train the decoder, Li \etal \cite{li2017universal} acknowledge that the decoder is far from perfect due to the loss of spatial information brought by the pooling operations in the encoder. Consequently, the accumulated image reconstruction error may gradually disturb the content details and lead to the Content Leak.

%\vspace{-1mm}
\noindent\textbf{Biased decoder training.} The above-mentioned reconstruction error can only partially explain the Content Leak phenomenon. In addition, biased decoder training is another cause. We take the training scheme of AdaIN as an example to explain how its loss function settings lead to Content Leak. AdaIN trains the decoder with a weighted combination of a content loss $L_c$ and a style loss $L_s$, where
\begin{align}
    L_c &= \|F(G(t)) - t\|_2, \label{eq:content_loss} \\
    L_s &= \sum\limits_{i=1...L} \| \mu(\phi_i(G(t))) - \mu(\phi_i(s)) \|_2 \label{eq:style_loss} \\ 
    &+ \sum\limits_{i=1...L} \| \sigma(\phi_i(G(t))) - \sigma(\phi_i(s)) \|_2. \notag 
\end{align}
Here $t$ denotes the output of the adaptive instance normalization, $F$ and $G$ represent the encoder and the decoder, respectively, $\phi_i$ denotes a layer in VGG19 used to compute the style loss, and $\mu,\sigma$ represent the mean and standard deviation of feature maps, respectively. Due to $L_s$, the decoder is trained to trade off between $L_c$ and $L_s$, rather than trying to reconstruct images perfectly. Fig.~\ref{fig:adain_loss} shows the training loss curves of AdaIN with and without $L_s$. When we train the decoder of AdaIN with only $L_c$, the converged value of $L_c$ (cyan curve) is significantly smaller than training with the weighted combination of $L_c$ and $L_s$ (blue curve). 
Consequently, the auto-encoder of AdaIN is biased towards rendering more artistic effects, which causes Content Leak. Fig.~\ref{fig:content_reconstruction} shows the image reconstruction results by propagating through the auto-encoder of AdaIN for 50 rounds. We take the output of the auto-encoder in the previous round as the input of the next round and perform image reconstruction repeatedly. With the increase of the inference rounds, weird artistic patterns gradually appear in the produced results, which indicates that the auto-encoder of AdaIN may memorize image styles in training and bias towards the training styles in inference. 
% To further demonstrate the biased decoder training in AdaIN, we compare the converged loss curves between the training scheme with both the content and style loss (normal training scheme) and only using the content loss. As Fig.~\ref{fig:adain_loss} shows, in the normal training scheme, the converged content loss is significantly higher than only using the content loss for training, which indicates that the style loss makes the decoder biased towards rendering more artistic effects.

\begin{figure*}
	\centering
	\includegraphics[width=\textwidth]{./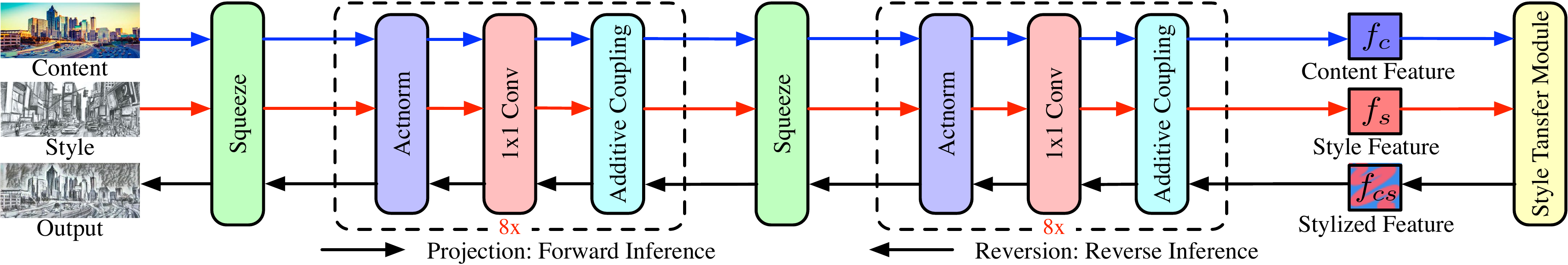}
	\caption{The framework of the proposed ArtFlow. For style transfer, Artflow works in a \emph{projection-transfer-reversion} scheme. \emph{Projection}: Extracting deep features of content and style images via the forward inference. \emph{Transfer}: Transferring the content and style features to the stylized feature via the style transfer module. \emph{Reversion}: Transforming the stylized feature to the stylized image via the reverse inference.}
	\label{fig:framework}
\end{figure*}

%\vspace{-2mm}
\noindent\textbf{Biased style transfer module.} Biased style transfer module is another cause of the Content Leak. We take the Style Decorator in Avatar-Net as an example. For the normalized content feature $f_c$ and style feature $f_s$, the key mechanism of the Style Decorator is motivated by the deep image analogy~\cite{liao2017visual}, which is composed of two steps. In the first step, the algorithm finds a corresponding patch in $f_s$ for every patch in $f_c$ according to the content similarity between two patches. In the next step, $f_{cs}$ is formed by replacing patches in $f_c$ with the corresponding patches in $f_s$. Since such a patch replacement is irreversible, $f_c$ cannot be recovered from $f_{cs}$, which makes $f_{cs}$ be biased towards style and consequently causes the Content Leak phenomenon.

We summarize and illustrate three main causes of Content Leak in Fig.~\ref{fig:bias_vs_unbias}. While the reconstruction error may disturb the content information in the output image, the biased image recovery and the biased transfer module may lead to a style shift in the output image. %Our method avoids all these issues and can achieve unbiased style transfer.

% We take the whitening and coloring transformation in WCT~\cite{li2017universal} for example. For simplicity, we neglect the centering and re-centering process of the WCT and concentrate on the whitening and coloring transforms,
% \begin{align}
%     &\mathrm{Whitening:} \hat{f}_c = E_c D_c^{-1/2} E_c^T f_c,\\
%     &\mathrm{Coloring:} f_{cs} = E_s D_s^{1/2} E_s^T \hat{f}_c.
% \end{align}
% where $f_c, f_{cs}$ represent the content and style-transferred features, $f_c f_c^T = E_c D_c E_c^T$ and $f_s f_s^T = E_s D_s E_s^T$. Therefore, the style-transferred features $f_{cs}$ can be expressed as,
% \begin{equation}
%     f_{cs} = E_s D_s^{1/2} E_s^T E_c D_c^{-1/2} E_c^T f_c.
% \end{equation}
% If the transfer module of WCT is unbiased, the output of the whitening transform to $f_{cs}$ should be equal to $\hat{f}_c$. However, since
% \begin{equation}
%     f_{cs}f_{cs}^T = D_s D_c^{-1} f_c f_c^T = E_c D_s E_c^T,
% \end{equation}
% we have,
% \begin{equation}
%     \mathrm{Whitening}(f_{cs}) = E_c D_s^{-1/2} E_c^T f_c \ne \hat{f}_c.
%     \label{eq:biased_transfer}
% \end{equation}
% Eq.~\ref{eq:biased_transfer} demonstrates that the whitening and coloring transforms in WCT is biased towards style and explains the cause of the Content Leak phenomenon in stylization results of WCT.

\section{Method}
\subsection{Overview of the ArtFlow Framework}
In this work, we present a novel unbiased style transfer framework named ArtFlow to address the Content Leak issue of the state-of-the-art style transfer approaches. Different from the \emph{encoder-transfer-decoder} scheme commonly used in existing neural style transfer algorithms, ArtFlow performs image style transfer through a \emph{projection-transfer-reversion} scheme. As shown in Fig.~\ref{fig:framework}, ArtFlow relies on a reversible neural flow model, named Projection Flow Network (PFN). In the \emph{projection} step, the content images and style images are fed into PFN for lossless deep feature extraction via the forward propagation of PFN. In the \emph{transfer} step, the content and style features are transferred to the stylized feature with an unbiased style transfer module. In the \emph{reversion} step, the stylized feature is reconstructed to a stylized image via the reverse propagation of PFN. Since the information flow in PFN and the unbiased style transfer module are both lossless and unbiased, ArtFlow achieves unbiased image style transfer to avoid the Content Leak.

In the following, we first discuss the details of PFN in Section \ref{sec:PFN}. Then, we discuss the choice of the unbiased style transfer module by performing both theoretical and quantitative analyses of the inherent biases of existing transfer modules in Section \ref{sec:separation}.

\subsection{Projection Flow Network}
\label{sec:PFN}
Projection Flow Network (PFN) serves as both the deep feature extractor and image synthesizer of our ArtFlow framework. In this work, we construct PFN by following the effective Glow model \cite{kingma2018glow}. As shown in Fig. \ref{fig:framework}, PFN consists of a chain of three learnable reversible transformations, \ie, additive coupling, invertible 1$\times$1 convolution, and Actnorm. All the components of PFN are reversible, making PFN fully reversible that the information is lossless during the forward and reverse propagation. In the following, we describe the three reversible transformations.

\noindent\textbf{Additive coupling.} 
Dinh \etal \cite{dinh2014nice,dinh2016realnvp} proposed an expressive reversible transformation named affine coupling layer. In this work, we adopt a special case of affine coupling, \ie, additive coupling, for PFN. The forward computation of additive coupling is
\begin{displaymath}
\begin{array}{rcl}
x_a, x_b & = & \mathrm{split}(x) \\
y_b & = & \mathrm{NN}(x_a) + x_b \\
y & = & \mathrm{concat}(x_a, y_b).
\end{array}
\end{displaymath}
The $\mathrm{split}()$ function splits a tensor into two halves along the channel dimension. $\mathrm{NN}()$ is (any) neural network where the input and the output have the same shape. The $\mathrm{concat}()$ function concatenates two tensors along the channel dimension. The reverse computation of additive coupling can be easily derived.

% The reverse computation of additive coupling can be easily derived as
% \begin{displaymath}
% \begin{array}{rcl}
% y_a, y_b & = & \mathrm{split}(y) \\
% x_a & = & y_a \\
% x_b & = & y_b - \mathrm{NN}(y_a)\\
% x & = & \mathrm{concat}(x_a, x_b).
% \end{array}
% \end{displaymath}

We observe that a flow model with additive coupling layers is sufficient to handle the style transfer task in experiments. Moreover, the additive coupling is more efficient and stable than the affine coupling in model training. Therefore, we employ additive coupling instead of affine coupling as the expressive transformation layer in PFN.

\noindent\textbf{Invertible 1$\times$1 convolution.} 
Since the additive coupling layer only processes a half of the feature maps, it is necessary to permute the channel dimensions of feature maps, so that each dimension can affect all the other dimensions \cite{dinh2014nice,dinh2016realnvp}. 
%Channel permutation of feature maps is significant for information flow in neural flows 
We follow Glow \cite{kingma2018glow} to use a learnable invertible 1$\times$1 convolution layer for flexible channel permutation, as
\begin{equation}
y_{i, j} = W x_{i,j}.
\end{equation}
$W$ is the weight matrix of shape $c \times c$, where $c$ is the channel dimension of tensor $x$ and $y$. Its reverse function is $x_{i, j} = W^{-1} y_{i,j}$.
%Invertible 1$\times$1 convolution is an effective channel permutation operation as an alternative to the fixed channel permutation. 
% Its reverse function is
% \begin{equation}
% x_{i, j} = W^{-1} y_{i,j}.
% \end{equation}

\noindent\textbf{Actnorm.} 
We follow Glow \cite{kingma2018glow} to use the activation normalization layer (Actnorm) as an alternative to batch normalization \cite{ioffe2015batchnorm}. Actnorm performs per-channel affine transformation on tensor $x$, as
\begin{equation}
y_{i,j} = w \odot x_{i,j} + b,
\end{equation}
where $i,j$ denote a spatial position on the tensor. $w$ and $b$ are the scale and bias parameters of affine transformation, and they are learnable in model training. The reverse funciton is $x_{i,j} = (y_{i,j} -b) / w$.
%Applying batch normalization \cite{ioffe2015batchnorm} to deep neural networks enables a more stable and faster model training. 
%, to alleviate the training issues of large models. 
% The reverse funciton is 
% \begin{equation}
% x_{i,j} = (y_{i,j} -b) / w.
% \end{equation}

In addition to the three reversible transformations, the $\mathrm{squeeze}$ operation is inserted into certain parts of PFN to reduce the spatial size of 2D feature maps. The $\mathrm{squeeze}$ operation splits the features into smaller patches along the spatial dimension and then concatenates the patches along the channel dimension.

\subsection{Unbiased Content-Style Separation}
\label{sec:separation}
Which style transfer module should ArtFlow use to achieve the unbiased style transfer? To answer this question, we first make a theoretical analysis of the biases of two popular style transfer modules, \ie, the adaptive instance normalization in AdaIN, and the whitening and coloring transforms in WCT.

The mechanism of the universal style transfer methods can be regarded as a natural evolution of the bilinear model proposed by Tenenbaum and Freeman in~\cite{Tenenbaum2000}, which separates an image into a content factor $C$ and a style factor $S$ and then makes style transfer by replacing the style factor $S$ in the content image with that in the target image. Similarly, the universal style transfer methods assume that the content information and the style information in the deep feature space are disentangled explicitly~\cite{huang2017arbitrary,li2017universal,li2018learning,lu2019optimal,an2019ultrafast,an2020real,wang2020collaborative,jing2020dynamic,wang2020diversified} or implicitly~\cite{chen2016fast,sheng2018avatar}. For example, AdaIN~\cite{huang2017arbitrary} separates deep features into normalized feature maps and mean/std vectors, which can be regarded as the content factor $C$ and style factor $S$, respectively. 

Following the theoretical framework of the Bilinear Model~\cite{Tenenbaum2000}, we can define the unbiased style transfer as:
\begin{myDef}
Suppose we have a bilinear style transfer module $f_{cs} = C(f_c) S(f_s)$, where $C, S$ denote the content factor and the style factor in the bilinear model, respectively. $f_{cs}$ is an unbiased style transfer module if $C(f_{cs}) = C(f_c)$ and $S(f_{cs}) = S(f_s)$.
\label{def:unbiased}
\end{myDef}
Based on Def.~\ref{def:unbiased}, we have the following two theorems. 
%\vspace{-2mm}
\begin{theorem}
\label{theorem1}
The adaptive instance normalization in AdaIN is an unbiased style transfer module.
\end{theorem}
% \begin{proof}
% Without loss of generality, we assume both $f_c$ and $f_s$ is centered. Therefore, we have,
% \begin{equation}
%     f_{cs} = \frac{f_c}{\sigma(f_c)} \sigma(f_s),
% \end{equation}
% where,
% \begin{equation}
%     C(f) = \frac{f}{\sigma(f)},\ S(f) = \sigma(f).
% \end{equation}
% Since,
% \begin{equation}
%     \sigma(f_{cs}) = \sigma(\frac{f_c}{\sigma(f_c)}) \sigma(f_s) = 1 \cdot \sigma(f_s) = \sigma(f_s),
% \end{equation}
% we have,
% \begin{align}
%     C(f_{cs}) &= \frac{f_cs}{\sigma(f_cs)} = \frac{f_c}{\sigma(f_c)} = C(f_c), \\
%     S(f_{cs}) &= \sigma(f_{cs}) = \sigma(f_s).
% \end{align}
% Therefore, the adaptive instance normalization in AdaIN is unbiased. $\hfill\blacksquare$ 
% \end{proof}

% Similarly, we can proof that the whitening and coloring transforms in WCT are also unbiased,
%\vspace{-4mm}
\begin{theorem}
\label{theorem2}
The whitening and coloring transform in WCT is an unbiased style transfer module.
\end{theorem}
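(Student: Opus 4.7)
The plan is to unroll the whitening-and-coloring transform explicitly, identify its content factor $C$ and style factor $S$ in the sense of Def.~\ref{def:unbiased}, and then verify the two fixed-point conditions by direct computation. Concretely, let $\mu_c$ and $\Sigma_c = E_c D_c E_c^T$ denote the mean and the eigendecomposition of the covariance of the content feature $f_c$, and similarly $\mu_s$, $\Sigma_s = E_s D_s E_s^T$ for the style feature $f_s$. WCT produces
\[ f_{cs} = E_s D_s^{1/2} E_s^T \, \hat f_c + \mu_s, \qquad \hat f_c := E_c D_c^{-1/2} E_c^T (f_c - \mu_c) . \]
Following the bilinear-model view used for AdaIN in Theorem~\ref{theorem1}, I would take $C(\cdot)$ to be the centered-and-whitened feature (so $C(f_c) = \hat f_c$) and $S(\cdot)$ to be the pair $(\mu,\Sigma)$, with the bilinear ``product'' realized as coloring by $\Sigma^{1/2}$ and shifting by the mean.

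Step one is to verify $S(f_{cs}) = S(f_s)$. The mean of $f_{cs}$ equals $\mu_s$ by construction, so it suffices to compute its covariance. Using the defining identity $\hat f_c \hat f_c^T = I$ of a whitened feature and orthogonality $E_s^T E_s = I$, I would expand
\[ (f_{cs}-\mu_s)(f_{cs}-\mu_s)^T = E_s D_s^{1/2} E_s^T \, \hat f_c \hat f_c^T \, E_s D_s^{1/2} E_s^T = E_s D_s E_s^T = \Sigma_s . \]
Step two is to verify $C(f_{cs}) = C(f_c)$. From step one, $(\mu_{cs},\Sigma_{cs}) = (\mu_s,\Sigma_s)$, so whitening $f_{cs}$ uses exactly $E_s$ and $D_s$. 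Substituting the formula for $f_{cs}$ then yields
\[ C(f_{cs}) = E_s D_s^{-1/2} E_s^T (f_{cs} - \mu_s) = E_s D_s^{-1/2} E_s^T E_s D_s^{1/2} E_s^T \hat f_c = \hat f_c = C(f_c), \]
once more using $E_s^T E_s = I$ and $D_s^{-1/2} D_s^{1/2} = I$.

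The main obstacle is less a technical difficulty than a modeling choice: Def.~\ref{def:unbiased} asks for a bilinear decomposition, and WCT is not literally a scalar-bilinear product, so one must specify which factorization makes the theorem meaningful. I would address this by defining $C$ and $S$ through the symmetric whitening and coloring operators that WCT itself uses; this makes the split canonical up to the (harmless) sign ambiguity in the eigenbasis, since every cancellation above only relies on $E_s E_s^T = I$. The only other hypotheses that need to be stated explicitly are that $\Sigma_c$ and $\Sigma_s$ are full rank, so that $D_c^{-1/2}$ and $D_s^{1/2}$ are well defined; under this assumption the two displayed identities together complete the proof.
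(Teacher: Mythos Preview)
Your proposal is correct and follows essentially the same route as the paper's proof: identify $C(f)$ as the whitened feature and $S(f)$ as the coloring data, show the covariance of $f_{cs}$ coincides with that of $f_s$, and then cancel $E_s D_s^{-1/2} E_s^T$ against $E_s D_s^{1/2} E_s^T$ to recover $\hat f_c$. The only differences are cosmetic---the paper assumes centered features from the outset and takes $S(f)=E D^{1/2}E^T$ rather than your pair $(\mu,\Sigma)$---while your explicit mention of the full-rank hypothesis and the eigenbasis-ambiguity remark are welcome clarifications the paper omits.
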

%\vspace{-2mm}
% \begin{proof}
% Without loss of generality, we assume both $f_c$ and $f_s$ is centered. Therefore, we have,
% \begin{align}
%     &\mathrm{Whitening:} \hat{f}_c = E_c D_c^{-1/2} E_c^T f_c,\\
%     &\mathrm{Coloring:} f_{cs} = E_s D_s^{1/2} E_s^T \hat{f}_c.
% \end{align}
% where $f_c, f_{cs}$ represent the content and style-transferred features, $f_c f_c^T = E_c D_c E_c^T$ and $f_s f_s^T = E_s D_s E_s^T$. Therefore, the style-transferred features $f_{cs}$ can be expressed as,
% \begin{equation}
%     f_{cs} = E_s D_s^{1/2} E_s^T E_c D_c^{-1/2} E_c^T f_c.
% \end{equation}
% Here
% \begin{equation}
%     C(f) = E D^{-1/2} E^T f, \ S(f) = E D^{1/2} E^T
% \end{equation}
% Since,
% \begin{equation}
%     f_{cs}f_{cs}^T = f_s f_s^T = E_s D_s E_s^T,
% \end{equation}
% we have,
% \begin{align}
%     C(f_{cs}) &= E_s D_s^{-1/2}E_s^T f_{cs} = E_c D_c^{-1/2} E_c^T f_c = C(f_c), \\
%     S(f_{cs}) &= E_s D_s^{1/2}E_s^T = S(f_s).
%     \label{eq:biased_transfer}
% \end{align}
% Therefore, the whitening and coloring transforms in WCT is unbiased. $\hfill\blacksquare$ 
% \end{proof}
The proofs for Theorems \ref{theorem1} and \ref{theorem2} can be found in the supplementary material. The Style Decorator in Avatar-Net \cite{sheng2018avatar} does not fit the bilinear model, while the empirical analysis in Sec.~\ref{causes} shows that \emph{Style Decorator is a biased style transfer module}.

In addition to the theoretical analyses, we also quantitatively verify the unbiased property of the transfer modules in AdaIN and WCT. 
% Similar to Def.~\ref{def:unbiased}, we have,
% \begin{myDef}
% Suppose we have a bilinear style transfer module $f_{cs} = C(f_c) S(f_s)$, where $C, S$ denotes the content factor and the style factor in the bilinear model. $f_{cs}$ is an unbiased style transfer module if $C(F(G(f_{cs}))) = C(f_c)$ and $S(F(G(f_{cs}))) = S(f_s)$, where $F, G$ are encoder and the decoder and $F(G(f)) = f$.
% \label{def:unbiased}
% \end{myDef}
Quantitatively studying the property of popular style transfer modules is an unsolved question because the auto-encoder used by existing universal style transfer methods has significant image reconstruction errors and may be biased towards styles as discussed in Sec.~\ref{causes}. Consequently, the produced style transfer results using auto-encoders cannot precisely reflect the effects of the style transfer modules upon deep features. 
% Therefore, $f(g(f_{cs})) \ne f_{cs}$, where $f$ and $g$ are the encoder and decoder, respectively. Consequently, the \emph{encoder-transfer-decoder} framework cannot be used to verify the bias/unbias property of the transfer module. 
% Besides the above-mentioned mathematical proof, since $\mathrm{PFN}_f(\mathrm{PFN}_r(f_{cs})) = f_{cs}$, we can experimentally verify the bias/unbias property of style transfer modules, where $\mathrm{PFN}_f$ and $\mathrm{PFN}_r$ denotes the forward and the reverse inference of the proposed PFN, respectively.
The proposed PFN addresses this issue. Specifically, if we take the forward inference and the reverse inference of the proposed PFN as the encoder and decoder, respectively, we can obtain a lossless and unbiased ``auto-encoder'' for style transfer, which can avoid the influence of the image reconstruction error and the biased image recovery 
brought by the decoder.
% and hereby allows quantitative verification of the properties (\eg, bias/unbias) of the popular style transfer modules upon deep features. 

\begin{figure}
    \centering
    \includegraphics[width=\linewidth]{./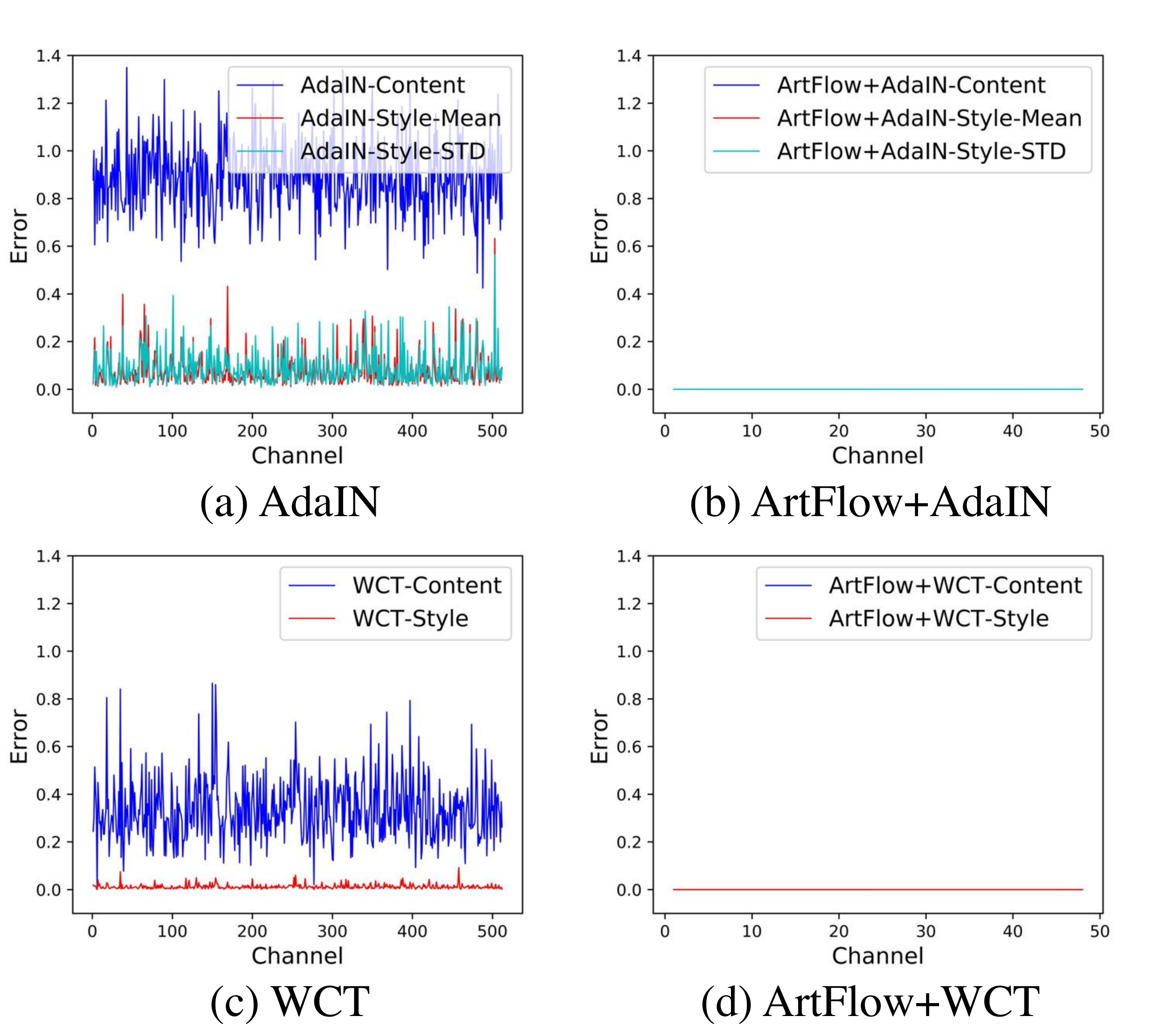}
    \caption{Content error between $F(G(f_{cs}))$ and $f_c$ and the style error between $F(G(f_{cs}))$ and $f_s$. ArtFlow with the transfer modules of AdaIN/WCT achieves lossless content/style reconstruction.}
%    \vspace{-4mm}
    \label{fig:error_curve}
\end{figure}
By using the proposed PFN as the lossless feature projector/inverter, we make a quantitative analysis about the content and style reconstruction errors of the transfer modules in AdaIN and WCT. Fig.~\ref{fig:error_curve} demonstrates two findings: 1) Considering (a) \vs (b) and (c) \vs (d), the proposed PFN can indeed make lossless and unbiased content and style reconstruction while the auto-encoder based on VGG19 cannot. 2) (b) and (d) quantitatively verify that the transfer module of AdaIN and WCT are unbiased. 

Based on theoretical and quantitative analyses to transfer modules in AdaIN and WCT, we let the adaptive instance normalization and the whitening and coloring transforms be two options for ArtFlow to achieve unbiased style transfer.

\begin{figure*}
	\centering
	\includegraphics[width=\textwidth]{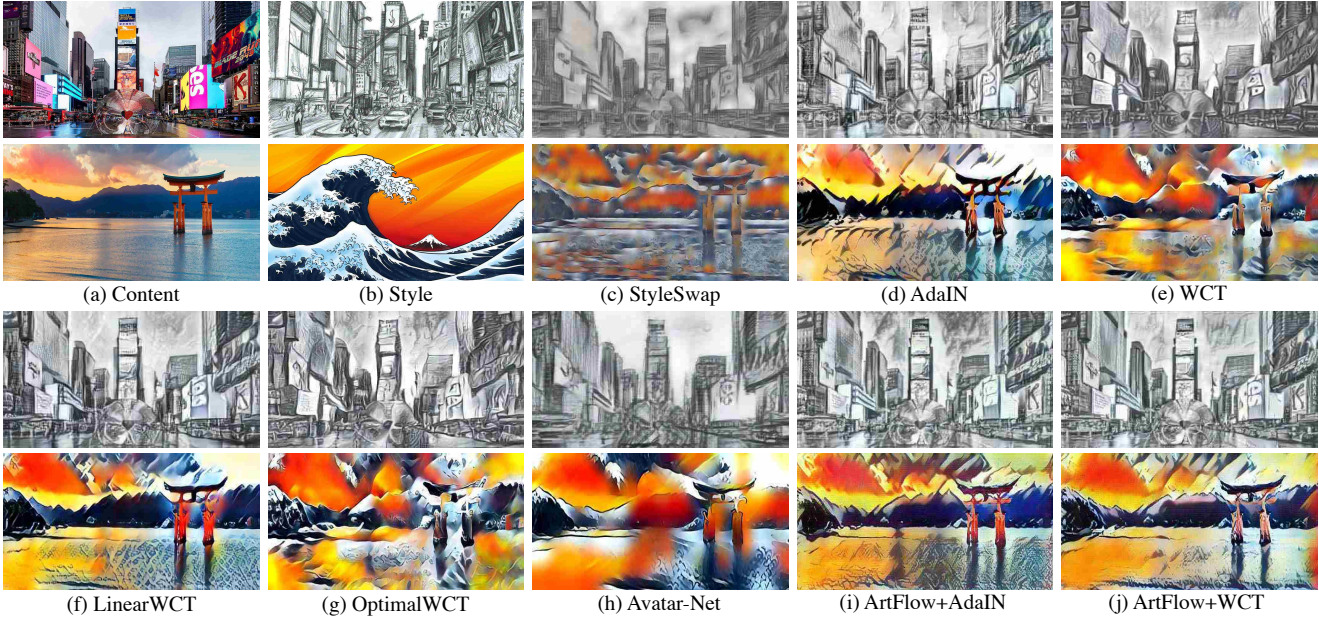}
	\caption{Style transfer results of the state-of-the-art universal style transfer algorithms.}
%	\vspace{-2mm}
	\label{fig:result}
\end{figure*}
% \begin{figure*}
% 	\centering
% 	\includegraphics[width=\textwidth]{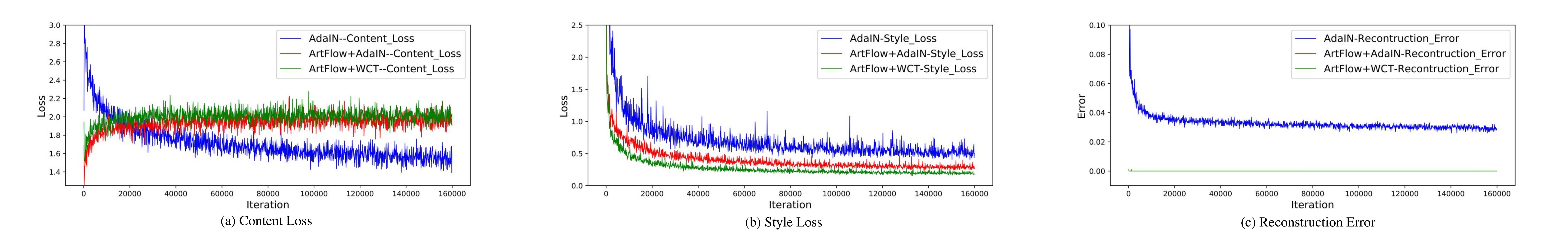}
% 	\caption{A comparison of the training losses of AdaIN and the proposed ArtFlow.}
% 	\vspace{-2mm}
% 	\label{fig:losses}
% \end{figure*}

\begin{table*}[t]
%%
%    \vspace{-1em}
    \caption{Quantitative evaluation results of universal stylization methods. The computing time is evaluated on 512$\times$512 images.}% using a RTX 2080Ti GPU.}
% 	\vspace{4mm}
    
\centering
    %\scriptsize 
    % \footnotesize
    %  \tabcolsep=0.15cm
    %  \renewcommand{\arraystretch}{1.2}
     \resizebox{1\hsize}{!}{
    \begin{tabular}{lcccccccccc}
        \hline
        Method & StyleSwap & AdaIN & WCT & LinearWCT & OptimalWCT & Avatar-Net & Self-Contained & ArtFlow+AdaIN & ArtFlow+WCT \\
        \hline
        SSIM~$\uparrow$ & 0.44 & 0.29 & 0.27 & 0.35 & 0.21 & 0.31 & 0.23 & 0.45 & \textbf{0.47}\\
        Content Loss~$\downarrow$ & \textbf{2.22} & 3.10 & 3.35 & 2.57 & 4.33 & 3.35 & 3.00 & 2.58 & 2.80\\
        Gram Loss~$\downarrow$ & 0.00482 & 0.00127 & 0.00074 & 0.00093 & \textbf{0.00035} & 0.00099 & 0.00473 & 0.00098 & 0.00078 \\
        Time (seconds)~$\downarrow$ & 0.272 & \textbf{0.064} & 0.997 & 0.092 & 1.808 & 9.129 & 0.156 & 0.408 & 0.428 \\
        \hline
    \end{tabular}
    }
    \vspace{-4mm}
    
    \label{tab:quantative}
\end{table*}

\begin{figure}[t]
    \centering
    \includegraphics[width=\linewidth]{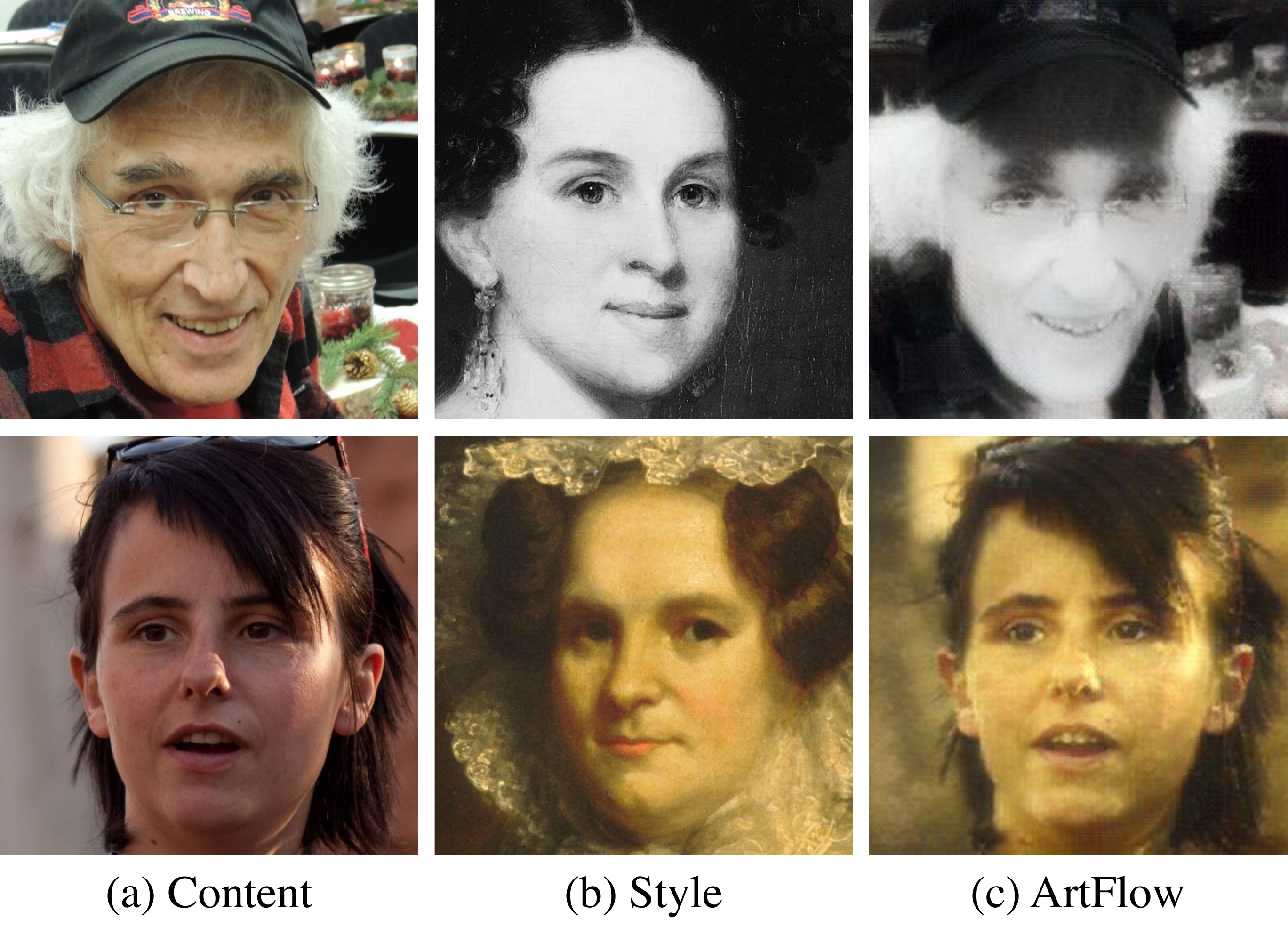}
    \caption{Portrait style transfer results using the proposed ArtFlow.}
    \vspace{-2mm}
    \label{fig:portrait}
\end{figure}

\section{Experiments}
To demonstrate that ArtFlow can achieve unbiased style transfer, we conduct extensive experiments. We make a comparison between the proposed ArtFlow and state-of-the-art style transfer algorithms in terms of stylization effect, computing time, content leak, and content factor visualization. Moreover, we present a new interesting application named reverse style transfer, which can only be performed by ArtFlow. More qualitative results, portrait style transfer images, and user study results are available in supplementary materials.

\subsection{Experimental Settings}

\noindent\textbf{Dataset.} Following the existing style transfer methods \cite{huang2017arbitrary,li2017universal}, we use the MS-COCO dataset \cite{mscoco} as the content images and the WikiArt dataset \cite{wikiart} as the style images. In training, we resize the input images to 512$\times$512 and randomly crop each image to 256$\times$256.

\noindent\textbf{Network architecture.} As shown in Fig. \ref{fig:framework}, the proposed PFN consists of two flow blocks, where each block contains eight neural flows. Each flow is a stack of an Actnorm layer, an invertible 1$\times$1 convolution, and an additive coupling layer. More studies on the number of blocks and flows can be found in the supplementary material.

\noindent\textbf{Training.}
We implement our ArtFlow on the PyTorch framework \cite{pytorch}. We train ArtFlow for 60,000 iterations using an Adam optimizer \cite{kingma2014adam} with a batch size of 2, an initial learning rate of 1e-4, and a learning rate decay of 5e-5. The training of ArtFlow takes about 12 hours on a single RTX 2080Ti GPU. We adopt the content loss $L_c$ in Eq. \ref{eq:content_loss} and style loss $L_s$ in Eq. \ref{eq:style_loss} as the training objective of ArtFlow. The loss weights of $L_c$ and $L_s$ are set to 0.1 and 1, respectively. 

\begin{figure*}
	\centering
	\includegraphics[width=\textwidth]{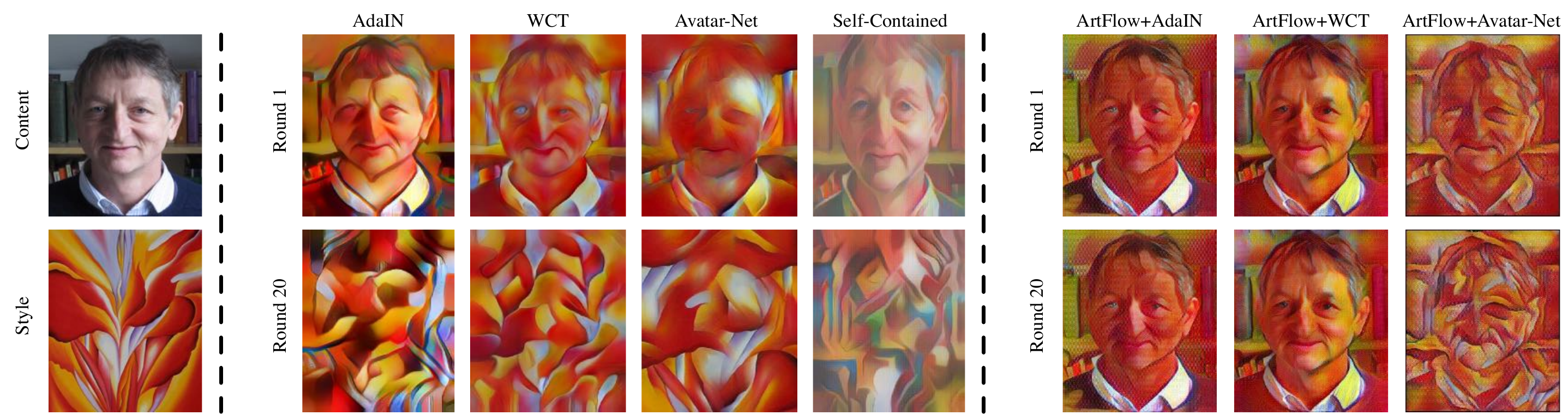}
	\caption{A comparison of the Content Leak phenomenon. We show the style transfer results of the first round and the 20-th round.}
	\vspace{-2mm}
	\label{fig:leak_comp}
\end{figure*}

\subsection{Style Transfer Results}
To demonstrate the style transfer ability of the proposed ArtFlow, we compare the style transfer results of ArtFlow in conjunction with the transfer module of AdaIN/WCT with the  state-of-the-art universal style transfer algorithms, \ie, StyleSwap~\cite{chen2016fast}, AdaIN~\cite{huang2017arbitrary}, WCT~\cite{li2017universal}, LinearWCT~\cite{li2018learning}, OptimalWCT~\cite{lu2019optimal}, and Avatar-Net~\cite{sheng2018avatar}. 

\noindent\textbf{Visual comparison.} Fig.~\ref{fig:result} shows the style transfer results by all the compared algorithms. While all the compared methods can produce good style transfer results, different methods create distinct artistic effects, \eg, WCT and OptimalWCT can create more colorful artistic effects, LinearWCT, AdaIN, ArtFlow can preserve more content details, and Avatar-Net can render more fine textures. The proposed ArtFlow in conjunction with AdaIN/WCT can produce visually comparable style transfer results to the state-of-the-art style transfer algorithms. It is worth noting that the style transfer results by ArtFlow preserve more details of the content image (please zoom in to compare the details of the billboards in the top row results), which confirms that ArtFlow corrects the biased style transfer issue of the compared methods and avoids the Content Leak. Moreover, we also perform portrait style transfer with the proposed ArtFlow. To train the portrait style transfer model, we use FFHQ~\cite{karras2019style} as the content and Metfaces~\cite{karras2020training} as the style. As Fig.~\ref{fig:portrait} shows, ArtFlow can also achieve good artistic style transfer results on portrait images.

\noindent\textbf{Quantitative comparison.} In addition to the visual comparison, we also make a quantitative comparison. Inspired by~\cite{yoo2019photorealistic}, we adopt the Structural Similarity Index (SSIM) and the content loss between the original contents and stylized images as the metric to measure the performance of the content information preservation in style transfer. To measure the ability to create artistic effects of a style transfer algorithm, inspired by~\cite{li2017universal}, we use the mean square error of Gram matrices between the style and the style-transferred images. As Tab.~\ref{tab:quantative} shows, ArtFlow achieves the highest SSIM score, which indicates that the proposed methods have a stronger ability to preserve more content information. While StyleSwap achieves the best content loss and a good SSIM score, its style transfer results do not look as good as the results produced by ArtFlow.
Regarding the Gram loss, since ArtFlow mainly addresses the Content Leak issue and corrects the biases towards style images, it is reasonable that ArtFlow does not achieve the lowest Gram loss. It is worth noting that ArtFlow in conjunction with AdaIN achieves a lower Gram loss than vanilla AdaIN while ArtFlow in conjunction with WCT has a similar Gram loss compared with WCT itself, indicating that ArtFlow can solve the Content Leak issue without hurting the stylization ability of AdaIN/WCT. In the third row of Tab.~\ref{tab:quantative}, we also show the computing time for all the compared methods. ArtFlow+AdaIN is slower than vanilla AdaIN since PFN does not adopt any pooling operations. Therefore, it requires more computations in the higher layers than AdaIN. Comparing with WCT, since ArtFlow does not need the multi-level stylization strategy used by WCT, ArtFlow+WCT is faster than vanilla WCT.

% \noindent\textbf{Training losses.}
% In Fig.~\ref{fig:losses}, we show the training losses of the proposed ArtFlow in comparison with AdaIN. Due to the reversible property of the proposed PFN, our content loss shown in (a) starts from a low value, and the remaining iterations trade-off between the content and style losses. It is worth noting that the converged style loss and the reconstruction error of ArtFlow is significantly lower than AdaIN, which demonstrates that the proposed PFN has a stronger representation ability in making style transfer than existing auto-encoder based framework. Moreover, ArtFlow maintains the image reconstruction error to be zero across the training loop, which reinforces that the proposed PFN can achieve lossless image projection/recovery.

 \begin{table*}[t]
%%
%    \vspace{-1em}
    \caption{User study results of universal stylization methods.}
% 	\vspace{2mm}
    
\centering
%    \scriptsize 
    \footnotesize
%    \small
	\tabcolsep=0.48cm
    \renewcommand{\arraystretch}{0.5}
     \resizebox{1\hsize}{!}{
    \begin{tabular}{lcccccccc}
        \toprule
        Method & StyleSwap & AdaIN & WCT & LinearWCT & OptimalWCT & Avatar-Net & ArtFlow \\
        \midrule
        Votes~$\uparrow$ & 7 & 19 & 64 & 257 & 60 & 78 & \textbf{314} \\
        \bottomrule
    \end{tabular}
    }
    \vspace{-4mm}
    
    \label{tab:user_study}
\end{table*}

\begin{figure}
    \centering
    \includegraphics[width=\linewidth]{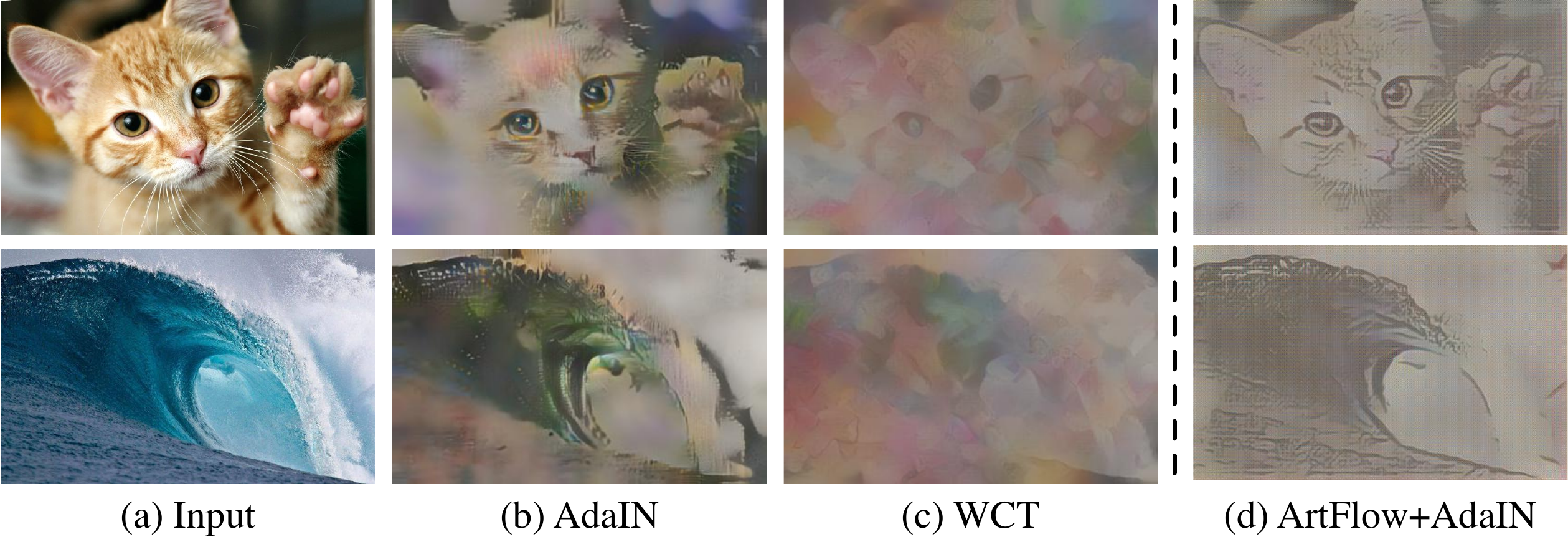}
    \caption{Visualization of content features of AdaIN, WCT, and the proposed ArtFlow.}
    \vspace{-4mm}
    \label{fig:content_visualize}
\end{figure}

\subsection{Content Leak}
%\vspace{-2mm}
As discussed in Sec.~\ref{sec:analysis}, if the Content Leak happens, the content information would gradually disintegrate when we perform style transfer repeatedly.
To demonstrate that the proposed ArtFlow can avoid the Content Leak phenomenon, we use the above way to visualize and compare the Content Leak phenomenon in AdaIN, WCT, Avatar-Net, and their counterparts in conjunction with the proposed ArtFlow. We also show the result by~\cite{chen2020self} because it also addresses the content leak issue. As Fig.~\ref{fig:leak_comp} shows, the Content Leak appears in vanilla AdaIN, WCT, Avatar-Net, and Self-Content~\cite{chen2020self} when we perform the style transfer for 20 rounds. In contrast, when we replace the VGG19 based auto-encoder with the proposed PFN in AdaIN/WCT, the Content Leak disappears completely, which indicates that ArtFlow in conjunction with AdaIN/WCT can effectively solve the Content Leak problem and therefore achieve unbiased style transfer. Regarding Avatar-Net, as discussed in Sec.~\ref{causes}, since the Style Decorator in Avatar-Net is inherently biased towards style, ArtFlow combining the Style Decorator as the transfer module cannot achieve unbiased style transfer. However, by replacing the auto-encoder with PFN, the Content Leak phenomenon is still significantly alleviated by Avatar-Net.

%\vspace{-2mm}
\subsection{Content-Style Separation}
%\vspace{-1mm}
As discussed in Sec.~\ref{sec:separation}, AdaIN and WCT can be regarded as an evolution of the Bilinear Model in~\cite{Tenenbaum2000}. Taking the view of the bilinear model, the mechanism of AdaIN and WCT can be regarded as: 1) disentangling the content and style factors in the deep feature space, and 2) replacing the style factor of the content image with the style factor of the style image. Since such a disentangled representation of the content and style exists in the feature space, we can visualize the pure content by inverting the content factor back to an image. Fig.~\ref{fig:content_visualize} shows the inverted content factor in AdaIN, WCT, and ArtFlow in conjunction with AdaIN. Compared with the inverted content factor of AdaIN and WCT, the results by ArtFlow contain significantly less style effects (\eg, colors) along with sharper image structures. Fig.~\ref{fig:content_visualize} shows that ArtFlow can achieve unbiased content-style separation while AdaIN and WCT cannot.

\section{User Study}
 To quantitatively demonstrate that the proposed ArtFlow has the comparable style transfer performance with the state-of-the-art algorithms, we perform a user study. Our user study is based on the validation dataset that consists of 43 content images and 27 style images. We obtain the style transfer results of StyleSwap, AdaIN, WCT, LinearWCT, OptimalWCT, Avatar-Net, and the proposed ArtFlow on every content-style pair, respectively. We finally obtain 1161 style transfer results for each method. In user study, we list all style transfer results of a content-style pair and let the user to choose ONE most preferable style transfer result. We eventually collect 799 effective votes. Tab.~\ref{tab:user_study} shows the style transfer results. The proposed ArtFlow obtains more votes compared with other style transfer methods, which demonstrates that our method has comparable style transfer performance with the state-of-the-art methods.

%\subsection{Reverse Style Transfer}
%Because the proposed PFN can achieve lossless and unbiased image projection and reversion, ArtFlow enables a very interesting application, where if we perform style transfer on a style-transferred image with the original content image as the new style image, we can recover the original content image in a lossless manner. We name this application \emph{Reverse Style Transfer}. Fig.~\ref{fig:reverse} shows the reverse style transfer results by AdaIN, WCT, and the proposed ArtFlow. It is remarkable that only the proposed ArtFlow can reverse the style transfer process and recover all the details of the original content image.
%
%\begin{figure}
%    \centering
%    \includegraphics[width=\linewidth]{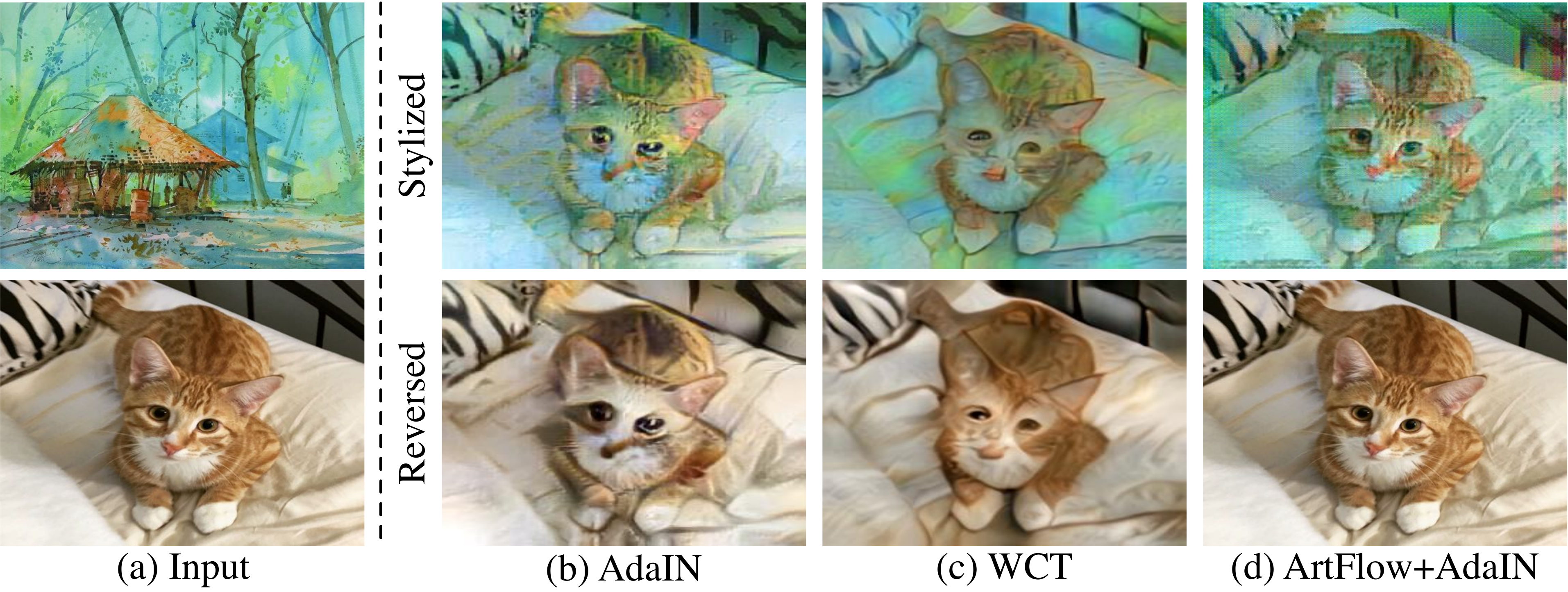}
%    \caption{A comparison of reverse style transfer results.}
%%    \vspace{-2mm}
%    \label{fig:reverse}
%\end{figure}

%\vspace{-2mm}
\section{Conclusion}
%\vspace{-1mm}
In this paper, we reveal a common issue in the state-of-the-art style transfer algorithms, \ie, the Content Leak phenomenon. Upon analyzing the main causes of the Content Leak, we present a new style transfer framework named ArtFlow. Unlike the existing style transfer algorithms, which adopt the VGG19 based auto-encoder to extract deep features, ArtFlow introduces a reversible neural flow-based network named PFN, thus enabling both the forward and reverse inferences to project images into the feature space and invert features back to the image space, respectively. ArtFlow in conjunction with an unbiased style transfer module, \eg, either AdaIN or WCT, achieves comparable style transfer results while avoiding the Content Leak phenomenon. Furthermore, because PFN can achieve lossless and unbiased image projection and reversion, the proposed ArtFlow can facilitate a better content-style separation and thus enable the reversion of the style transfer in a lossless manner.

\section{Acknowledgement}
%\vspace{-1mm}
This work is supported in part by NSF awards IIS-1704337, IIS-1813709, and our corporate sponsors.
\clearpage

{\small
\bibliographystyle{ieee_fullname}
\bibliography{main}
}
\clearpage

\appendix
\section{Pseudocode of ArtFlow}
\begin{algorithm}
	\caption{Unbiased style transfer process of ArtFlow}
	\begin{algorithmic}
	\REQUIRE an input content image $I_c$, an input style image $I_s$, an unbiased style transfer module $T$, the proposed new network PFN;
	\STATE - feed $I_c$ and $I_s$ to PFN;
	\STATE - perform the forward propagation of PFN and obtain the content feature $f_c$ and the style feature $f_s$, respectively;
	\STATE - obtain the style-transferred feature $f_{cs}$ based on $T$ in the manner of $f_{cs} = T(f_c, f_s)$;
	\STATE obtain the style-transferred image $I_{cs}$ by running the reverse propagation of PFN.
	\RETURN $I_{cs}$
	\end{algorithmic}
	\end{algorithm}

\section{Proof of the Theorem 1}
\begin{theorem}
\label{theorem1}
The adaptive instance normalization in AdaIN is an unbiased style transfer module.
\end{theorem}
 \begin{proof}
 Without loss of generality, we assume both $f_c$ and $f_s$ is centered. Therefore, we have,
 \begin{equation}
     f_{cs} = \frac{f_c}{\sigma(f_c)} \sigma(f_s),
 \end{equation}
 where,
 \begin{equation}
     C(f) = \frac{f}{\sigma(f)},\ S(f) = \sigma(f).
 \end{equation}
 Since,
 \begin{equation}
     \sigma(f_{cs}) = \sigma(\frac{f_c}{\sigma(f_c)}) \sigma(f_s) = 1 \cdot \sigma(f_s) = \sigma(f_s),
 \end{equation}
 we have,
 \begin{align}
     C(f_{cs}) &= \frac{f_{cs}}{\sigma(f_{cs})} = \frac{f_c}{\sigma(f_c)} = C(f_c), \\
     S(f_{cs}) &= \sigma(f_{cs}) = \sigma(f_s).
 \end{align}
 Therefore, the adaptive instance normalization in AdaIN is unbiased. $\hfill\blacksquare$ 
 \end{proof}
 
 \section{Proof of the Theorem 2}
 \begin{theorem}
\label{theorem2}
The whitening and coloring transforms in WCT is an unbiased style transfer module.
\end{theorem}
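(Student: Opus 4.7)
The plan is to mirror the structure of the AdaIN proof by first recasting the whitening and coloring transform (WCT) explicitly in the bilinear form of Definition~\ref{def:unbiased}, and then verifying the two invariance conditions $C(f_{cs})=C(f_c)$ and $S(f_{cs})=S(f_s)$ by a direct computation of the covariance of $f_{cs}$. Without loss of generality I would assume $f_c$ and $f_s$ are centered, and write the eigendecompositions $f_c f_c^\top = E_c D_c E_c^\top$ and $f_s f_s^\top = E_s D_s E_s^\top$. The WCT then produces $f_{cs} = E_s D_s^{1/2} E_s^\top \, E_c D_c^{-1/2} E_c^\top f_c$.

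Next, I would read off the content and style factors from this expression. The natural identification is $C(f) = E_f D_f^{-1/2} E_f^\top f$ (the whitened feature, which by construction has identity covariance) and $S(f) = E_f D_f^{1/2} E_f^\top$ (the symmetric coloring matrix, i.e.\ the unique positive semidefinite square root of $f f^\top$). With these definitions the WCT output takes the bilinear form $f_{cs} = S(f_s)\, C(f_c)$, matching the hypothesis of Definition~\ref{def:unbiased}.

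The crux of the argument is then to compute the covariance of $f_{cs}$. Because $C(f_c)$ is whitened, $C(f_c)C(f_c)^\top = I$, so
\begin{equation}
f_{cs} f_{cs}^\top = S(f_s)\, C(f_c)\, C(f_c)^\top\, S(f_s)^\top = S(f_s)^2 = E_s D_s E_s^\top = f_s f_s^\top,
\end{equation}
where I have used that $S(f_s)$ is symmetric. Hence $f_{cs}$ and $f_s$ share the same covariance, which forces the same eigendecomposition and therefore $S(f_{cs}) = S(f_s)$. Substituting this back, $C(f_{cs}) = S(f_{cs})^{-1} f_{cs} = S(f_s)^{-1} S(f_s) C(f_c) = C(f_c)$, so both conditions of Definition~\ref{def:unbiased} hold and WCT is unbiased.

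The only mildly delicate point — and the one I would be most careful to justify — is the well-definedness of the factorization $C,S$: this relies on $S(f)$ being the \emph{symmetric} positive semidefinite square root of $f f^\top$ (so that $S(f) = S(f)^\top$ in the covariance computation) and on the invertibility of $D_c$ and $D_s$. I would briefly note that WCT handles rank deficiency by restricting to the nonzero eigenvalue subspace, so the argument goes through on that subspace without change. Everything else is a routine matrix manipulation once the bilinear identification is in place.
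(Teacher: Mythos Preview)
Your proposal is correct and follows essentially the same route as the paper's proof: identify $C(f)=E D^{-1/2}E^\top f$ and $S(f)=E D^{1/2}E^\top$, then use $f_{cs}f_{cs}^\top=f_sf_s^\top$ to recover both invariances. If anything, you are more careful than the paper, which simply asserts the covariance identity without the intermediate whitening step and does not comment on the uniqueness of the symmetric square root or the rank-deficient case.
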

 \begin{proof}
 Without loss of generality, we assume both $f_c$ and $f_s$ is centered. Therefore, we have,
 \begin{align}
     &\mathrm{Whitening:} \hat{f}_c = E_c D_c^{-1/2} E_c^T f_c,\\
     &\mathrm{Coloring:} f_{cs} = E_s D_s^{1/2} E_s^T \hat{f}_c.
 \end{align}
 where $f_c, f_{cs}$ represent the content and style-transferred features, $f_c f_c^T = E_c D_c E_c^T$ and $f_s f_s^T = E_s D_s E_s^T$. Therefore, the style-transferred features $f_{cs}$ can be expressed as,
 \begin{equation}
     f_{cs} = E_s D_s^{1/2} E_s^T E_c D_c^{-1/2} E_c^T f_c.
 \end{equation}
 Here
 \begin{equation}
     C(f) = E D^{-1/2} E^T f, \ S(f) = E D^{1/2} E^T
 \end{equation}
 Since,
 \begin{equation}
     f_{cs}f_{cs}^T = f_s f_s^T = E_s D_s E_s^T,
 \end{equation}
 we have,
 \begin{align}
     C(f_{cs}) &= E_s D_s^{-1/2}E_s^T f_{cs} \\
     &= E_c D_c^{-1/2} E_c^T f_c \\
     &= C(f_c), \\
     S(f_{cs}) &= E_s D_s^{1/2}E_s^T = S(f_s).
     \label{eq:biased_transfer}
 \end{align}
 Therefore, the whitening and coloring transforms in WCT is unbiased. $\hfill\blacksquare$ 
 \end{proof}
 
 \begin{figure*}[t]
	\centering
	\includegraphics[width=\textwidth]{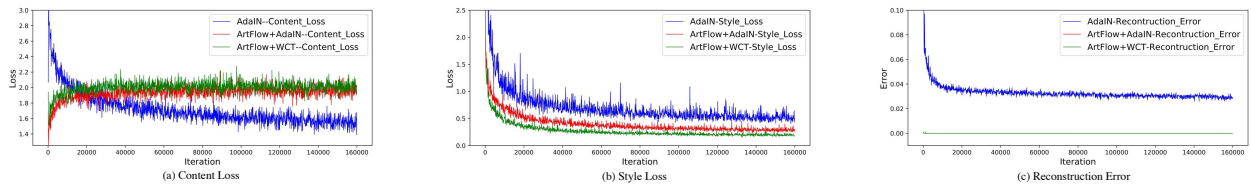}
	\caption{A comparison of the training losses of AdaIN and the proposed ArtFlow.}
	\vspace{20mm}
	\label{fig:losses}
\end{figure*}

 \begin{figure*}[t]
	\centering
	\includegraphics[width=\textwidth]{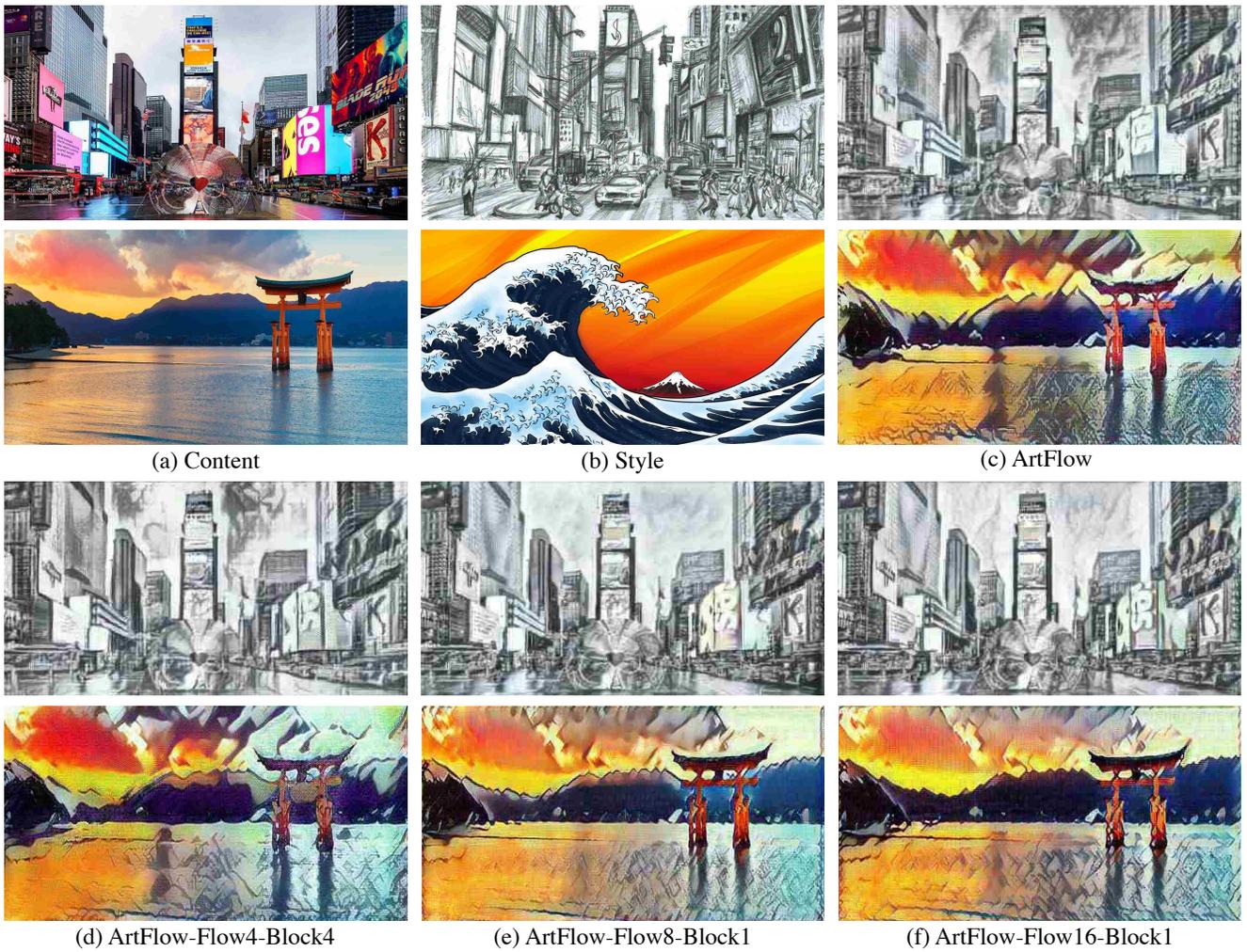}
	\caption{A comparison of the style transfer result by different neural flow architectures.}
	\vspace{20mm}
	\label{fig:ablation}
\end{figure*}

\begin{table*}[t]
%%
%    \vspace{-1em}
    \caption{Quantitative evaluation results different flow architectures.}
	\vspace{2mm}
    
\centering
    %\scriptsize 
    \small
     \tabcolsep=0.3cm
     \renewcommand{\arraystretch}{0.9}
    \begin{tabular}{lcccc}
        \toprule
        Method & ArtFlow (PFN)  & ArtFlow-Flow4-Block4 & ArtFlow-Flow8-Block1 & ArtFlow-Flow16-Block1 \\
        \midrule
        SSIM~$\uparrow$ & 0.45 & 0.401 & \textbf{0.482} & 0.480 \\
        Gram Loss~$\downarrow$ & \textbf{0.00098} & 0.00130 & 0.00139 & 0.00137 \\
        \bottomrule
    \end{tabular}
%    \vspace{-2mm}
    
    \label{tab:quantitative}
\end{table*}
% \begin{table*}[t]
%%%
%%    \vspace{-1em}
%    \caption{User study results of universal stylization methods.}
% 	\vspace{2mm}
%    
%\centering
%    %\scriptsize 
%    % \footnotesize
%    \small
%	\tabcolsep=0.38cm
%    \renewcommand{\arraystretch}{0.9}
%%     \resizebox{1\hsize}{!}{
%    \begin{tabular}{lcccccccc}
%        \toprule
%        Method & StyleSwap & AdaIN & WCT & LinearWCT & OptimalWCT & Avatar-Net & ArtFlow \\
%        \midrule
%        Votes~$\uparrow$ & 7 & 19 & 64 & 257 & 60 & 78 & \textbf{314} \\
%        \bottomrule
%    \end{tabular}
%%    }
%%    \vspace{-4mm}
%    
%    \label{tab:user_study}
%\end{table*}
\begin{figure*}
    \centering
    \includegraphics[width=0.9\linewidth]{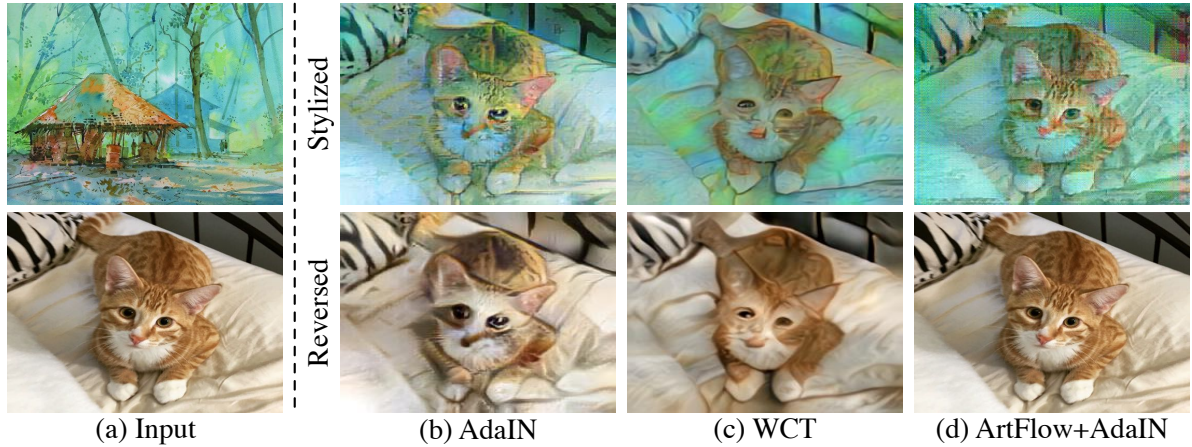}
    \caption{A comparison of reverse style transfer results.}
    \vspace{-2mm}
    \label{fig:reverse}
\end{figure*}
 
 \section{Training Loss Trends}
 In Fig.~\ref{fig:losses}, we show the training losses of the proposed ArtFlow in comparison with AdaIN. Due to the reversible property of the proposed PFN, our content loss shown in (a) starts from a low value, and the remaining iterations trade-off between the content and style losses. It is worth noting that the converged style loss and the reconstruction error of ArtFlow is significantly lower than AdaIN, which demonstrates that the proposed PFN has a stronger representation ability in making style transfer than existing auto-encoder based framework. Moreover, ArtFlow maintains the image reconstruction error to be zero across the training loop, which reinforces that the proposed PFN can achieve lossless image projection/recovery.
 
 \section{Network Structure Analysis of PFN}
 In addition to the proposed PFN, we also conduct experiments on three alternative network architectures based on neural flows: ArtFlow-Flow8-Block1, ArtFlow-Flow16-Block1, and ArtFlow-Flow4-Block4, where the number after ``Flow'' represents the number of cascades flow modules within a flow block, while the number after ``Block'' denotes the number of flow blocks. In the experimented network, each flow is a stack of an Actnorm layer, an invertible 1$\times$1 convolution, and an additive coupling layer. According to this naming scheme, the proposed PFN is ArtFlow-Flow8-Block2. Fig.~\ref{fig:ablation} shows the style transfer results by the proposed PFN and all the compared architectures. As Fig.~\ref{fig:ablation} (e) and (f) show, the style transfer results by ArtFlow-Flow16-Block1 and ArtFlow-Flow16-Block1 are not as colorful as other compared architectures, which indicates that these two architectures have compromised stylization abilities. Regarding ArtFlow-Flow4-Block4, it can produce comparable style transfer results to the proposed PFN. However, in terms of the details, the proposed PFN is better than ArtFlow-Flow4-Block4. Please zoom in to see the details of the Itsukushima Shrine, where the proposed PFN creates more colorful results. We also conduct a quantitative comparison between the proposed ArtFlow and three alternative architectures. As Tab.~\ref{tab:quantitative} shows, the proposed PFN achieves a better trade-off between the SSIM index and the Gram loss.
 
 \subsection{Reverse Style Transfer}
Because the proposed PFN can achieve lossless and unbiased image projection and reversion, ArtFlow enables a very interesting application, where if we perform style transfer on a style-transferred image with the original content image as the new style image, we can recover the original content image in a lossless manner. We name this application the \emph{Reverse Style Transfer}. Fig.~\ref{fig:reverse} shows the reverse style transfer results by AdaIN, WCT, and the proposed ArtFlow. It is remarkable that only the proposed ArtFlow can reverse the style transfer process and recover all the details of the original content image.
 
 \section{More Style Transfer Results}
 We show more comparisons of style transfer results between the proposed ArtFlow and state-of-the-art algorithms in Fig.~\ref{fig:comp1}, Fig.~\ref{fig:comp2}, Fig.~\ref{fig:comp3}, Fig.~\ref{fig:comp4}, and Fig.~\ref{fig:comp5}.
 
  \section{Portrait Style Transfer Results}
 We use the proposed ArtFlow to make style transfer on portrait images. To train the portrait style transfer model, we use images of the FFHQ~\cite{karras2019style} dataset as the content and images of the Metfaces~\cite{karras2020training} dataset as the style. Fig.~\ref{fig:portrait1} and Fig.~\ref{fig:portrait2} show the portrait style transfer results by the proposed ArtFlow.
 
  \section{More Content Leak Results}
 We show more comparisons of the \emph{Content Leak} phenomenon in Fig.~\ref{fig:leak1} and Fig.~\ref{fig:leak2}.
 
   \section{More Content Factor Reconstruction Results}
 We show more comparisons of the content factor reconstruction results in Fig.~\ref{fig:separation}.
 
% \section{User Study}
% To quantitatively demonstrate the comparable style transfer performance of the proposed ArtFlow compared with the state-of-the-art style transfer algorithms. We perform a user study and let users subjectively measure the style transfer effects of all the compared style transfer algorithms. Our user study is based on the validation dataset that consists of 43 content images and 27 style images. We get the style transfer results by using StyleSwap, AdaIN, WCT, LinearWCT, OptimalWCT, Avatar-Net, and the proposed ArtFlow on every content-style pair, respectively. We finally obtain 1161 style transfer results for each method. In user study, we list all style transfer results with the same content-style pair by all the compared methods and let the user to choose ONE most preferable style transfer result. We eventually collect xxx effective votes. Tab.~\ref{tab:user_study} shows the style transfer results. The proposed ArtFlow obtains more votes compared with other style transfer methods, which demonstrates that our method has comparable style transfer performance with the state-of-the-art methods.
 
 \begin{figure*}[t]
	\centering
	\includegraphics[width=\textwidth]{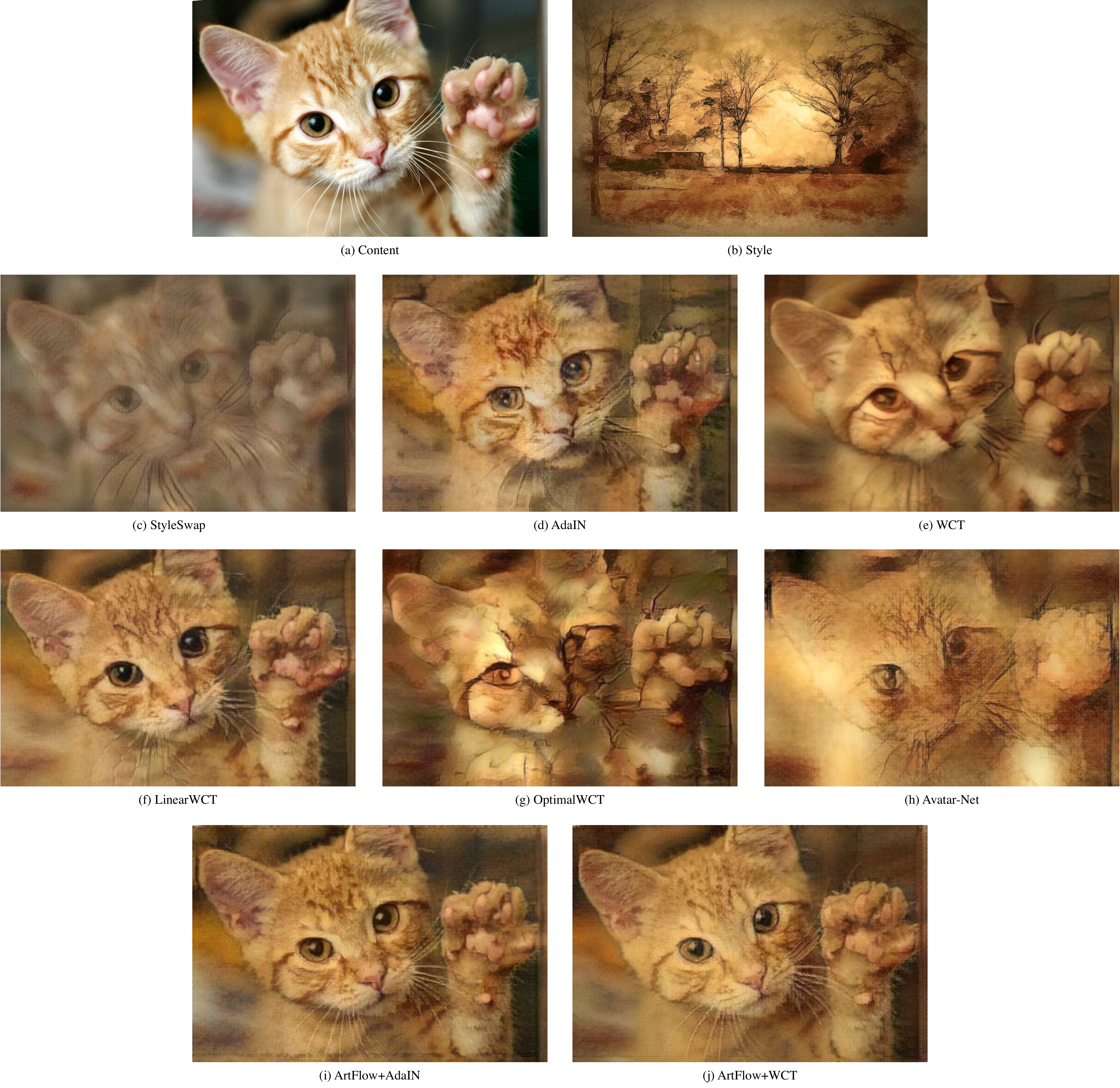}
	\caption{A comparison of the style transfer results between the proposed ArtFlow and state-of-the-art algorithms.}
%	\vspace{20mm}
	\label{fig:comp1}
\end{figure*}

\begin{figure*}[t]
	\centering
	\includegraphics[width=\textwidth]{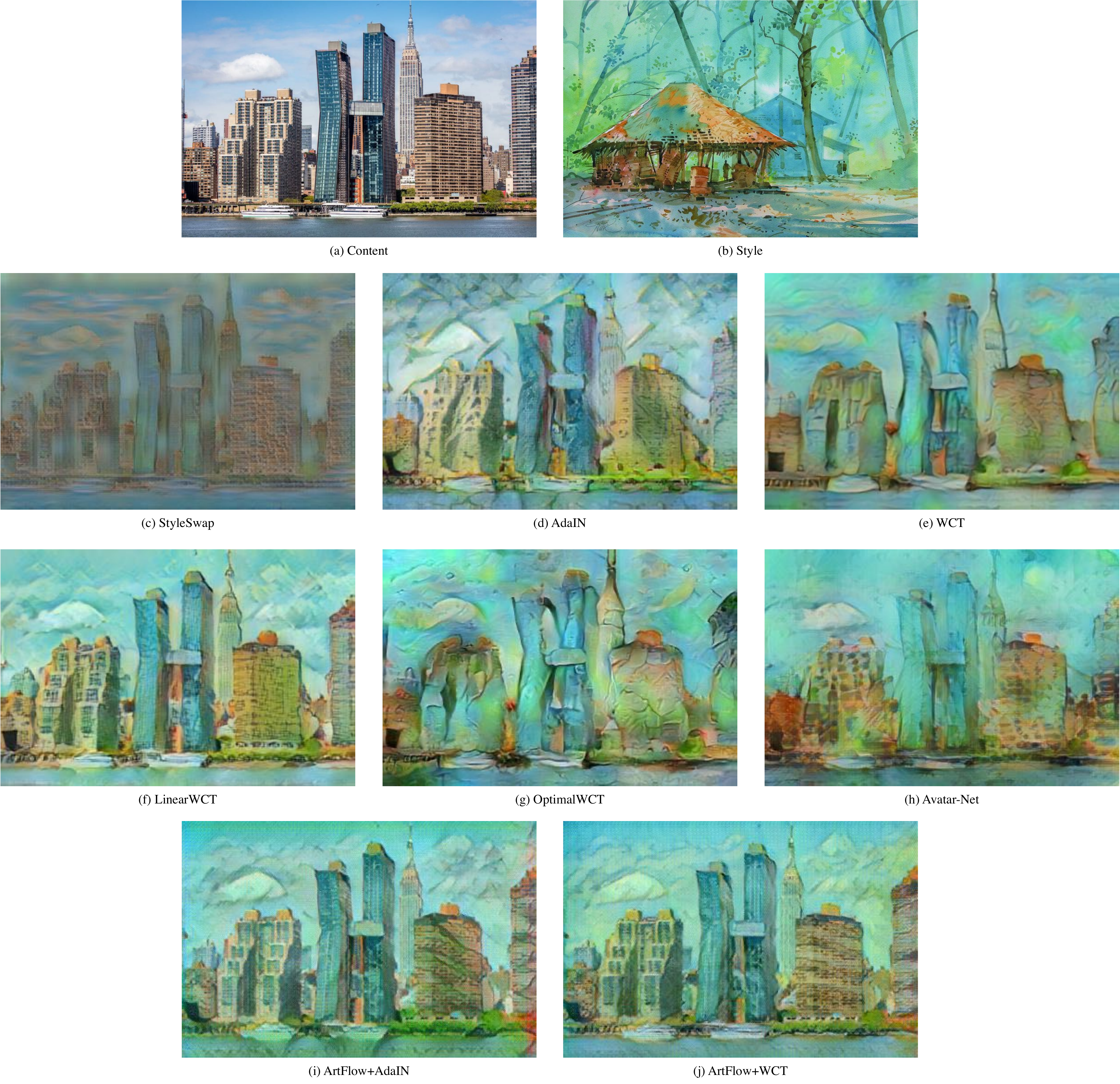}
	\caption{A comparison of the style transfer results between the proposed ArtFlow and state-of-the-art algorithms.}
%	\vspace{20mm}
	\label{fig:comp2}
\end{figure*}

\begin{figure*}[t]
	\centering
	\includegraphics[width=\textwidth]{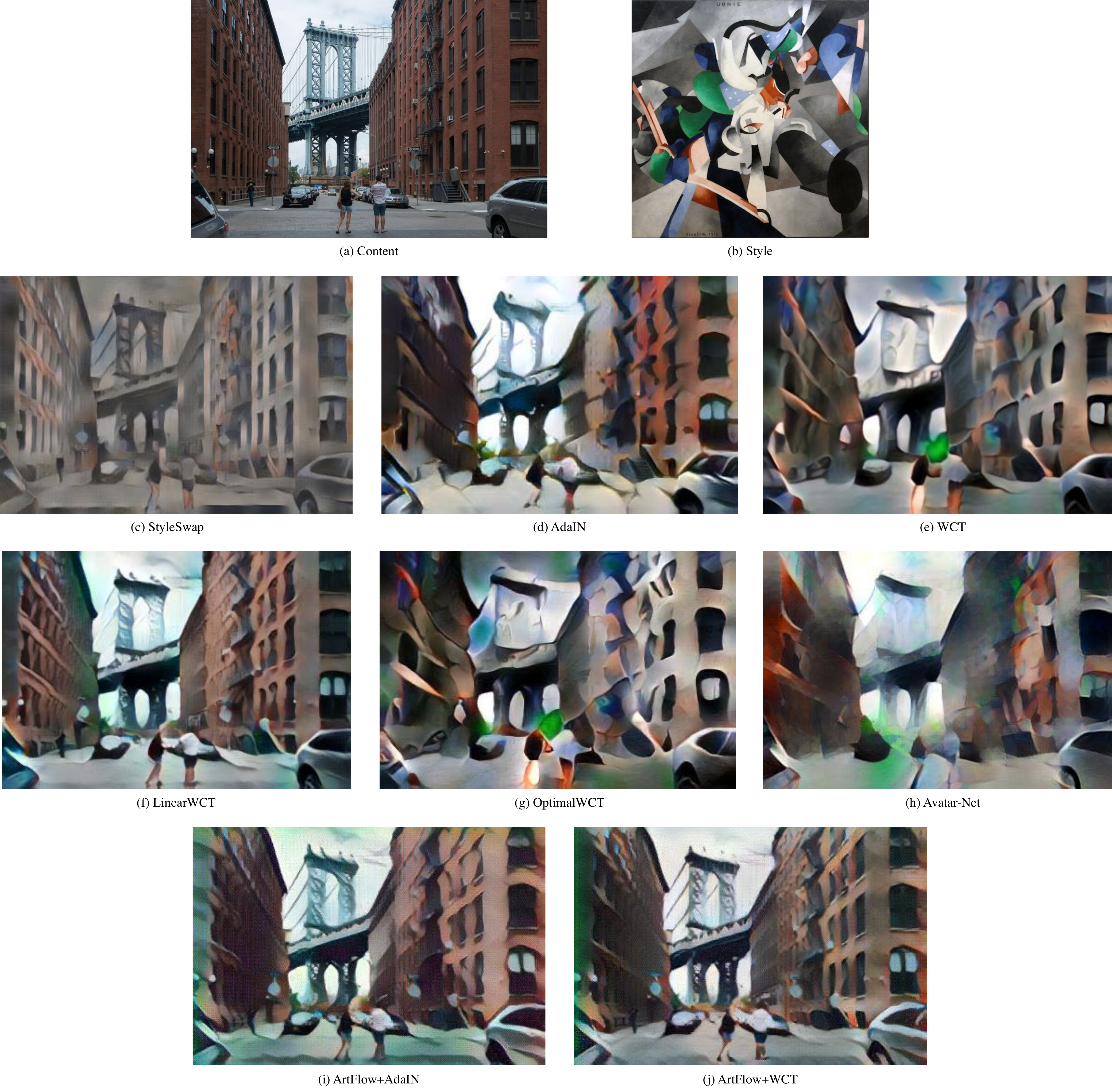}
	\caption{A comparison of the style transfer results between the proposed ArtFlow and state-of-the-art algorithms.}
%	\vspace{20mm}
	\label{fig:comp3}
\end{figure*}

\begin{figure*}[t]
	\centering
	\includegraphics[width=\textwidth]{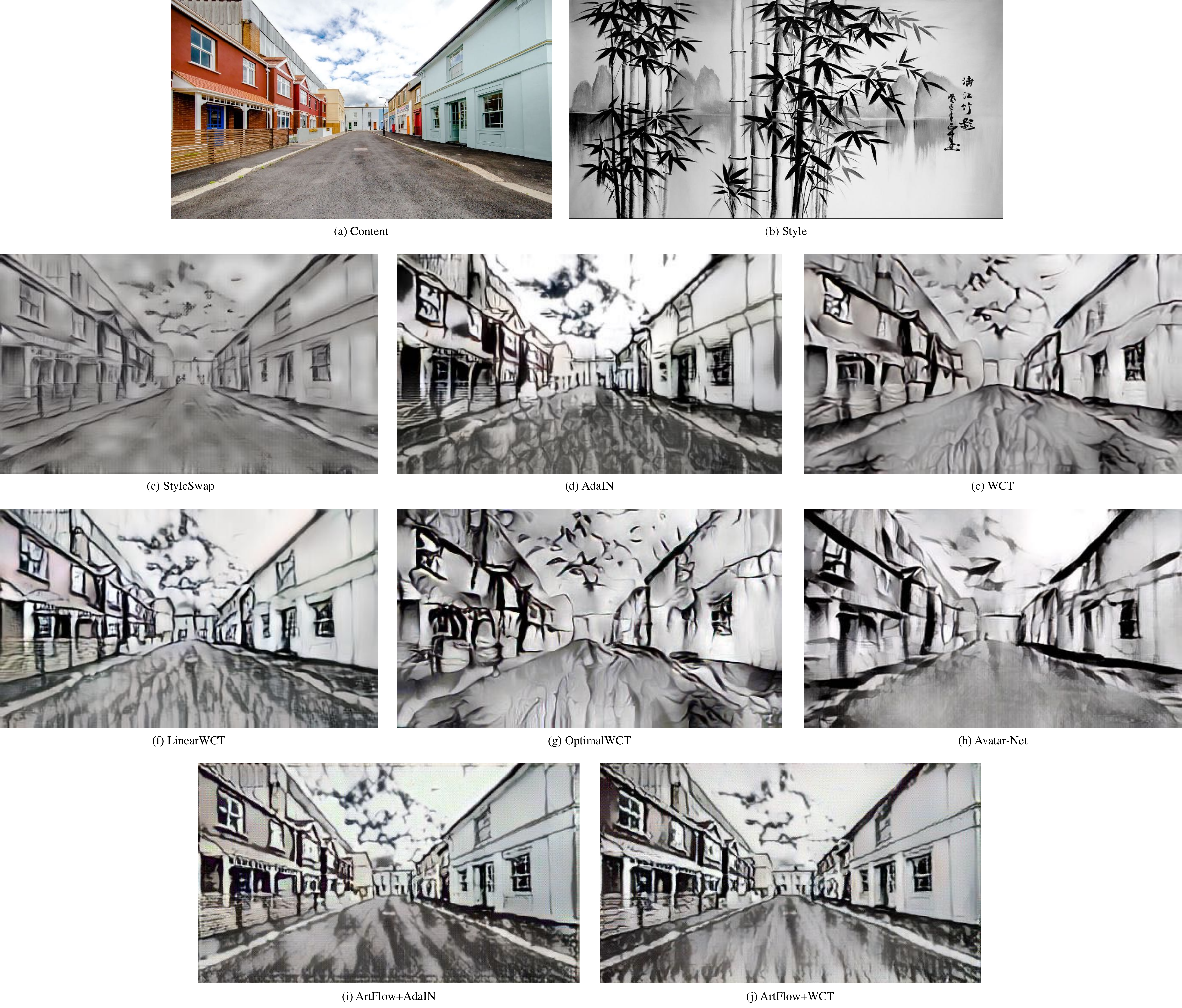}
	\caption{A comparison of the style transfer results between the proposed ArtFlow and state-of-the-art algorithms.}
%	\vspace{20mm}
	\label{fig:comp4}
\end{figure*}

\begin{figure*}[t]
	\centering
	\includegraphics[width=\textwidth]{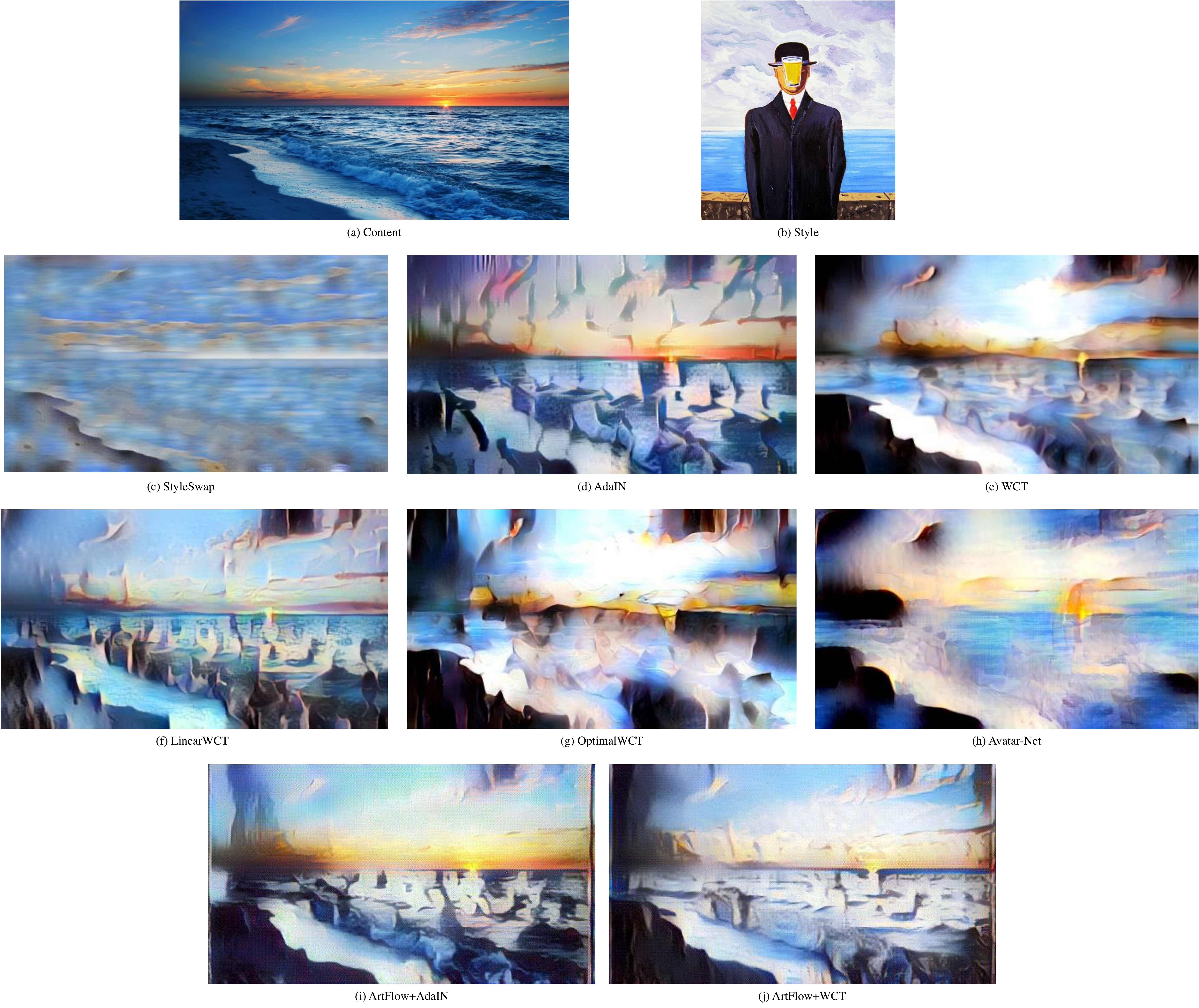}
	\caption{A comparison of the style transfer results between the proposed ArtFlow and state-of-the-art algorithms.}
%	\vspace{20mm}
	\label{fig:comp5}
\end{figure*}

\begin{figure*}[t]
	\centering
	\includegraphics[width=0.85\textwidth]{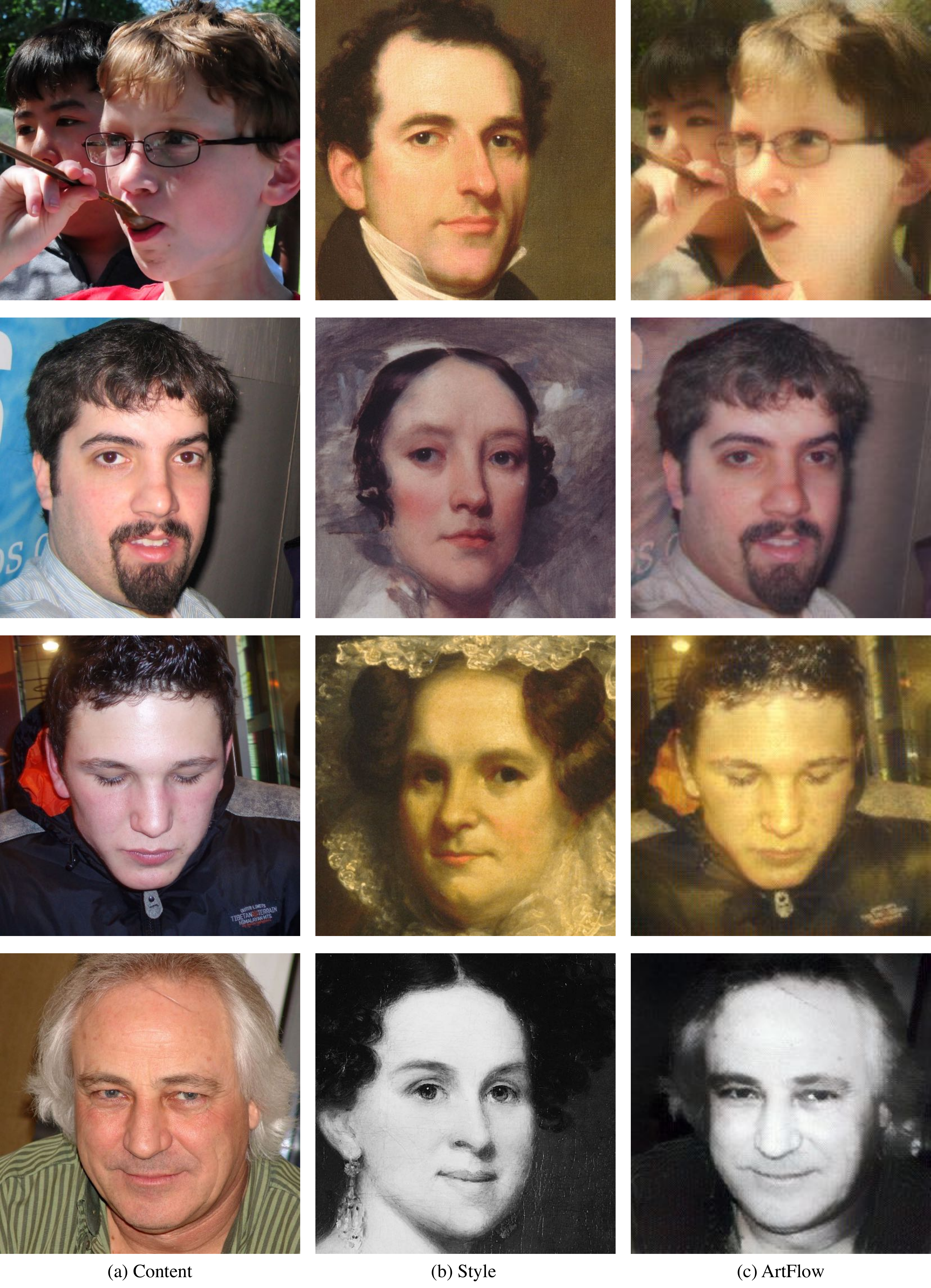}
	\caption{Portrait style transfer results by the proposed ArtFlow.}
%	\vspace{20mm}
	\label{fig:portrait1}
\end{figure*}

\begin{figure*}[t]
	\centering
	\includegraphics[width=0.85\textwidth]{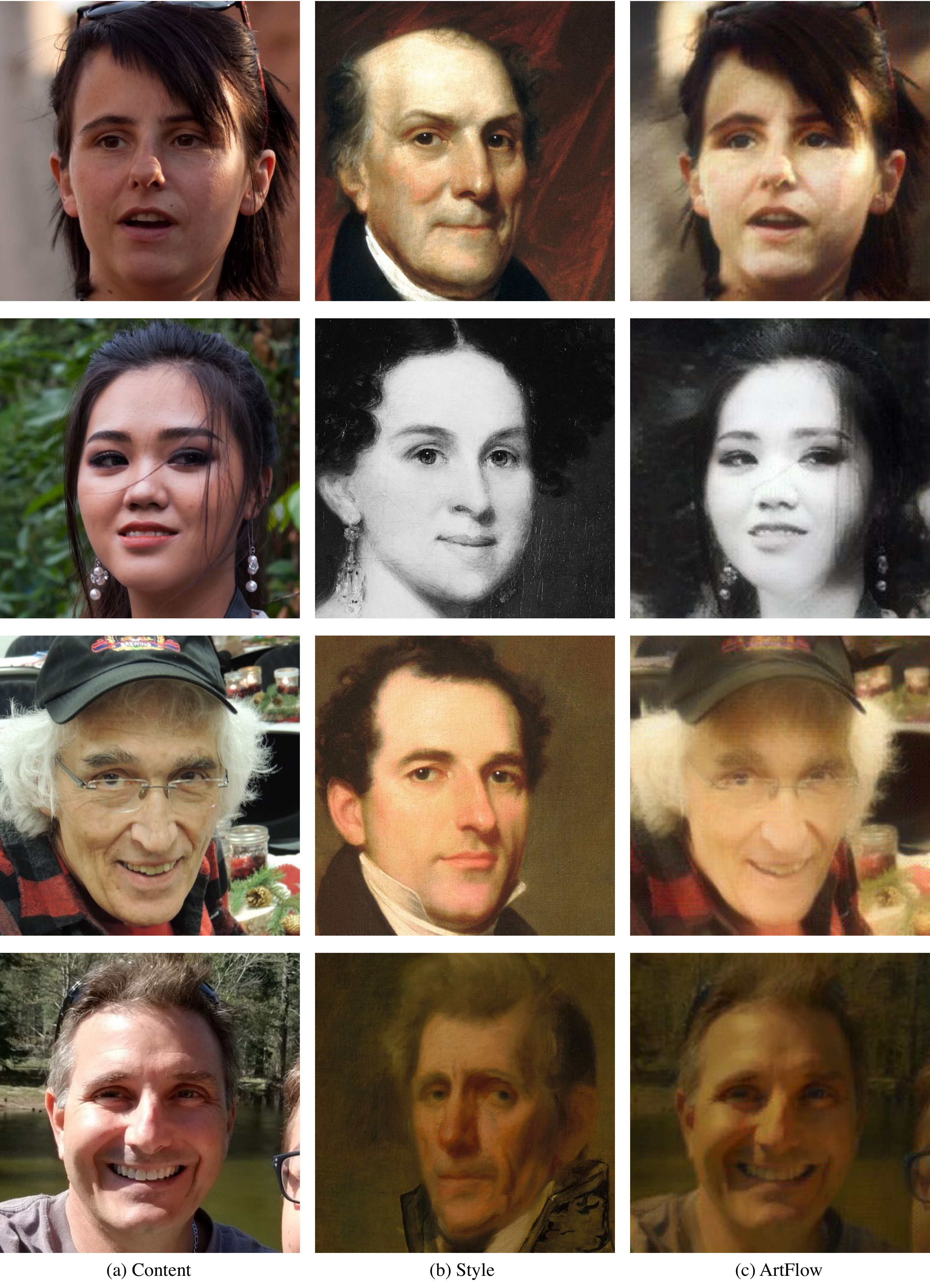}
	\caption{Portrait style transfer results by the proposed ArtFlow.}
%	\vspace{20mm}
	\label{fig:portrait2}
\end{figure*}

\begin{figure*}[t]
	\centering
	\includegraphics[width=\textwidth]{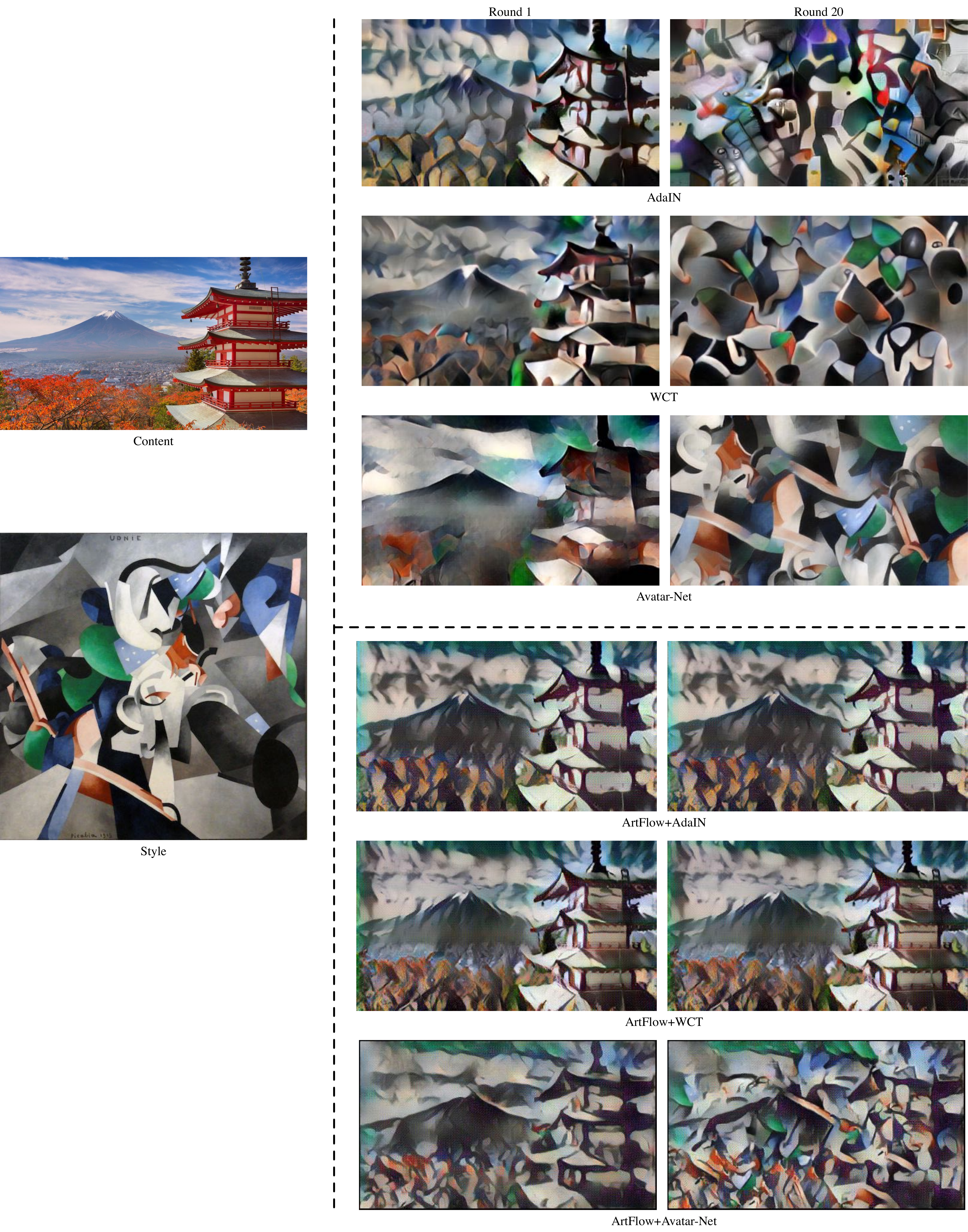}
	\caption{A comparison of the \emph{Content Leak} phenomenon.}
%	\vspace{20mm}
	\label{fig:leak1}
\end{figure*}

\begin{figure*}[t]
	\centering
	\includegraphics[width=\textwidth]{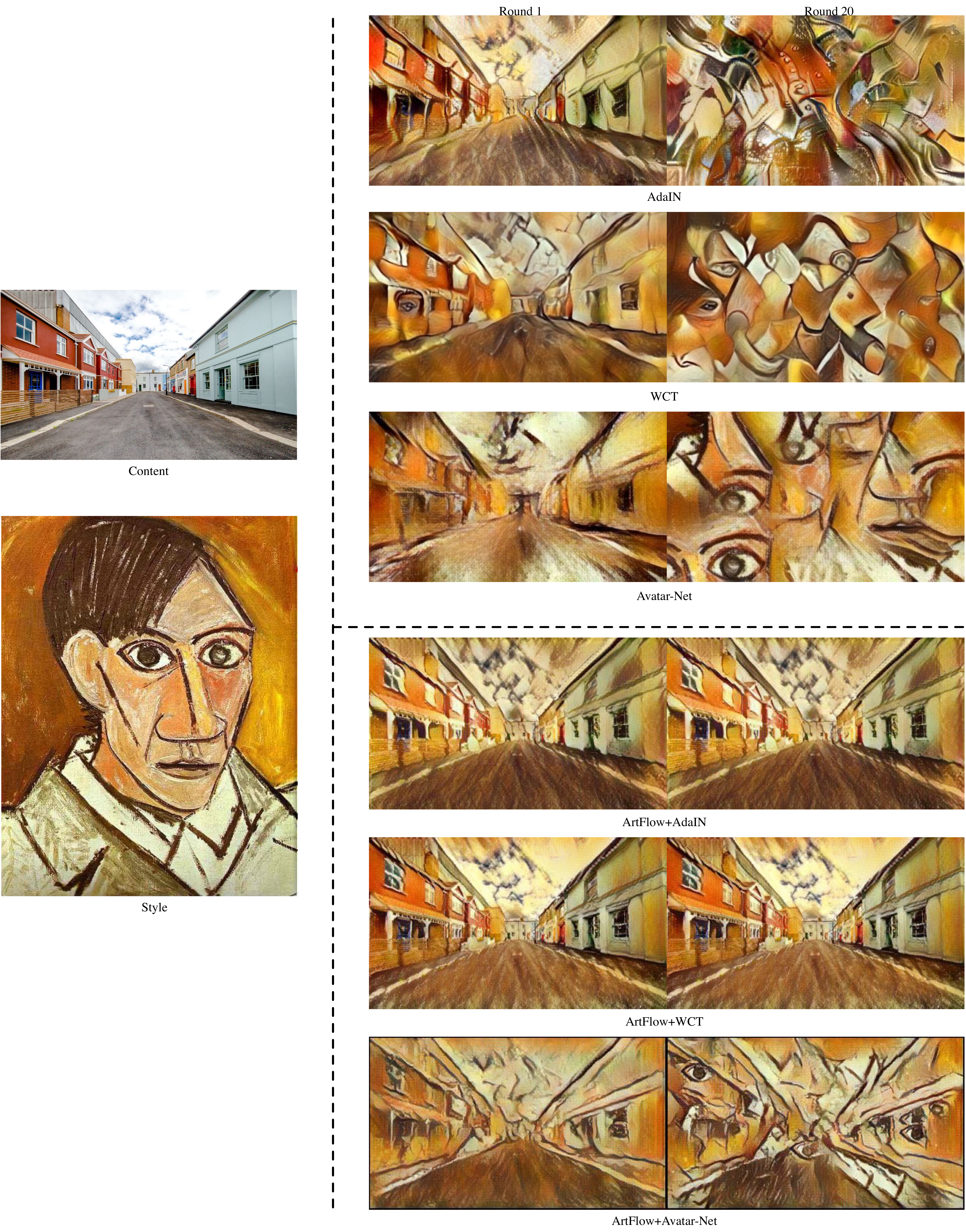}
	\caption{A comparison of the \emph{Content Leak} phenomenon.}
%	\vspace{20mm}
	\label{fig:leak2}
\end{figure*}

\begin{figure*}[t]
	\centering
	\includegraphics[width=\textwidth]{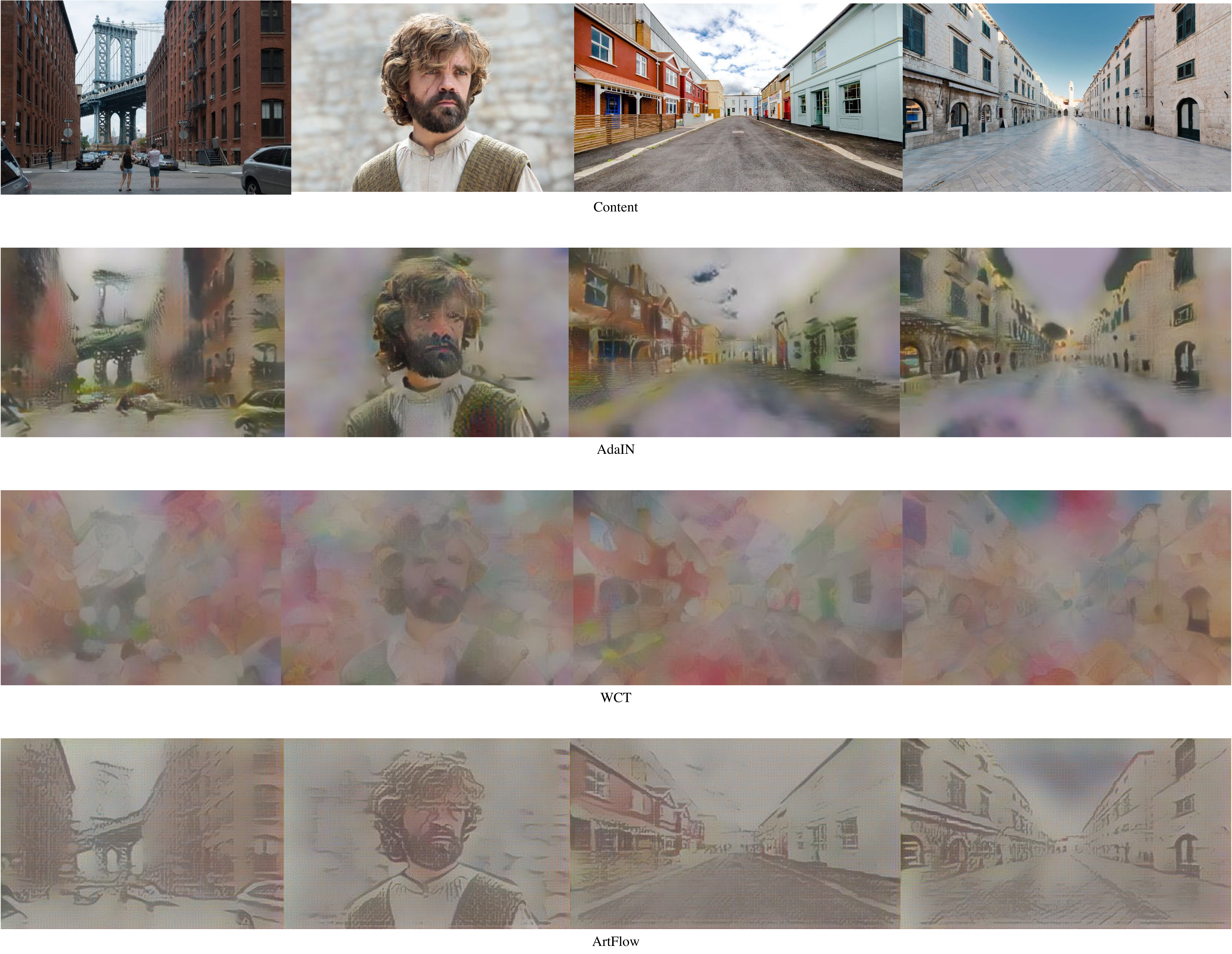}
	\caption{A comparison of the reconstructed content factor.}
%	\vspace{20mm}
	\label{fig:separation}
\end{figure*}

\end{document}